\def\eqref#1{equation~\ref{#1}}
\def\1{\bm{1}}
\DeclareMathAlphabet{\mathsfit}{\encodingdefault}{\sfdefault}{m}{sl}
\SetMathAlphabet{\mathsfit}{bold}{\encodingdefault}{\sfdefault}{bx}{n}
\newtheorem{theorem}{Theorem}
\newcommand{\norms}[1]{\Vert#1\Vert}
\newcommand{\iprods}[1]{\langle #1\rangle}
\useunder{\uline}{\ul}{}
\mathchardef\mhyphen="2D
\newcommand{\RandConv}{\texttt{RandConv}}
\title{Robust and Generalizable Visual Representation Learning via Random Convolutions}
\author[1]{\textbf{Zhenlin Xu}}
\author[1]{\textbf{Deyi Liu}}
\author[2]{\textbf{Junlin Yang}}
\author[1]{\textbf{Colin Raffel}}
\author[1]{\textbf{Marc Niethammer}}
\affil[1]{
% Department of Computer Science, 
University of North Carolina at Chapel Hill }
\affil[2]{
% Department of Biomedical Engineering, 
Yale University}
\affil[1]{\footnotesize\texttt{\{zhenlinx, mn, craffel\}@cs.unc.edu, deyi@live.unc.edu}}
\affil[2]{\footnotesize\texttt{junlin.yang@yale.edu}}
\begin{document}

\maketitle

\begin{abstract}
	While successful for various computer vision tasks, deep neural networks have shown to be vulnerable to texture style shifts and small perturbations to which humans are robust. In this work, we show that the robustness of neural networks can be greatly improved through the use of random convolutions as data augmentation. Random convolutions are approximately shape-preserving and may distort local textures. Intuitively, randomized convolutions create an infinite number of new domains with similar global shapes but random local texture. Therefore, we explore using outputs of multi-scale random convolutions as new images or mixing them with the original images during training. When applying a network trained with our approach to unseen domains, our method consistently improves the performance on domain generalization benchmarks and is scalable to ImageNet. In particular, in the challenging scenario of generalizing to the sketch domain in PACS and to ImageNet-Sketch, our method outperforms state-of-art methods by a large margin. More interestingly, our method can benefit downstream tasks by providing a more robust pretrained visual representation. \footnote{Code is available at \url{ https://github.com/wildphoton/RandConv}.}
\end{abstract}

\section{Introduction}
\label{Introduction}

Generalizability and robustness to out-of-distribution samples have been major pain points when applying deep neural networks (DNNs) in real world applications~\citep{volpi2018generalizing}. Though DNNs are typically trained on datasets with millions of training samples, they still lack robustness to domain shift, small perturbations, and adversarial examples~\citep{luo2019taking}.  Recent research has shown that neural networks tend to use superficial features rather than global shape information for prediction even when trained on large-scale datasets such as ImageNet~\citep{geirhos2018imagenettrained}. These superficial features can be local textures or even patterns imperceptible to humans but detectable to DNNs, as is the case for adversarial examples~\citep{ilyas2019adversarial}. In contrast, image semantics often depend more on object shapes rather than local textures. For image data, local texture differences are one of the main sources of domain shift, e.g., between synthetic virtual images and real data~\citep{sun2014virtual}. Our goal is therefore to learn visual representations that are invariant to local texture and that generalize to unseen domains. {While texture and color may be treated as different concepts, we follow the convention in \cite{geirhos2018imagenettrained} and include color when talking about texture.}

We address the challenging setting of robust visual representation learning from \emph{single domain data}. Limited work exists in this setting. Proposed methods include data augmentation~\citep{volpi2018generalizing, qiao2020learning, geirhos2018imagenettrained}, domain randomization~\citep{tobin2017domain,yue2019domain}, self-supervised learning~\citep{carlucci2019jigen}, and penalizing the predictive power of low-level network features~\citep{wang2019learning}. Following the spirit of adding inductive bias towards global shape information over local textures, we propose using random convolutions to improve the robustness to domain shifts and small perturbations. While recently \citet{lee2020network} proposed a similar technique for improving the generalization of reinforcement learning agents in unseen environments, we focus on visual representation learning and examine our approach on visual domain generalization benchmarks. Our method also includes the multiscale design and a mixing variant.
% We incorporate {\RandConv} with the multiscale and mixing designs and provide insights and justification on why {\RandConv} works. We show that {\RandConv} is approximately shape-preserving by proving the relative distance preserving property of random linear projections. We also extend {\RandConv} via a multi-scale and mixing design and test it extensively on domain generalization benchmarks. Further, we demonstrate the transferability of robustness with our method and shed light on how to better use pretrained models.
In addition, considering that many computer vision tasks rely on training deep networks based on ImageNet-pretrained weights (including some domain generalization benchmarks), we ask \emph{``Can a more robust pretrained model make the finetuned model more robust on downstream tasks?''} Different from~\citep{kornblith2019better,salman2020adversarially} who studied the transferability of a pretrained ImageNet representation to new tasks while focusing on in-domain generalization, we explore generalization performance on \emph{unseen domains} for new tasks.

\begin{figure}[t]
	\begin{center}
		\renewcommand{\arraystretch}{0.5}
		\setlength{\tabcolsep}{0.00cm}
		\newcommand\cwidth{0.14\textwidth}
		\begin{adjustbox}{max width=\textwidth}

			\begin{tabular}{ccccccc}
				% (a)  &  (b)  & (c)  \\
    			\multicolumn{7}{c}{\includegraphics[width=0.9\textwidth]{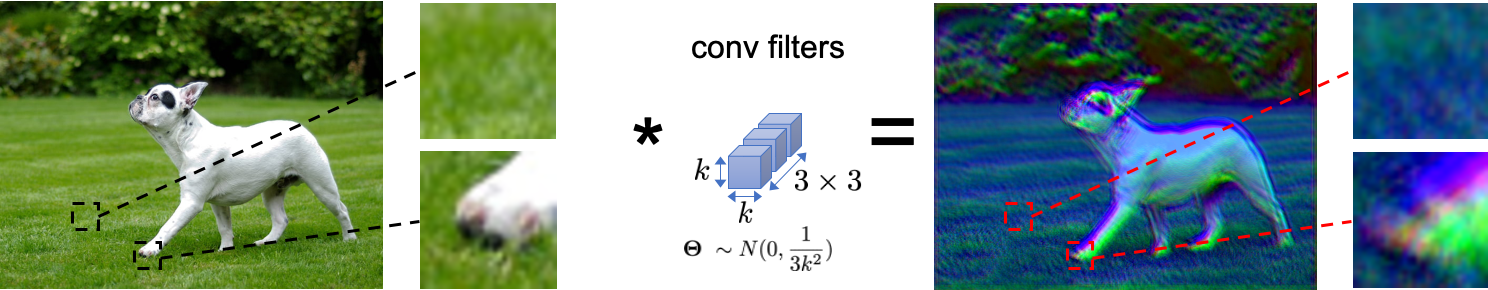}} \\\\
				\newcounter{imgnum}
				\newcounter{sample_id}
				\newcounter{ks}
				Input & $k=1$ & $k=3$ & $k=5$ & $k=7$ & $k=11$ & $k=15$\\
				\forloop{imgnum}{2}{\value{imgnum} < 3}{
					\includegraphics[width=\cwidth]{Fig/examples/image\arabic{imgnum}.png} 
					\forloop{sample_id}{2}{\value{sample_id} < 3}{
						%\foreach \ks in {1,3,5,7,11}{ 
						& \includegraphics[width=\cwidth]{Fig/examples/image\arabic{imgnum}_kernel1_sample\arabic{sample_id}.png}
						& \includegraphics[width=\cwidth]{Fig/examples/image\arabic{imgnum}_kernel3_sample\arabic{sample_id}.png} 
						& \includegraphics[width=\cwidth]{Fig/examples/image\arabic{imgnum}_kernel5_sample\arabic{sample_id}.png}
						& \includegraphics[width=\cwidth]{Fig/examples/image\arabic{imgnum}_kernel7_sample\arabic{sample_id}.png} 
						& \includegraphics[width=\cwidth]{Fig/examples/image\arabic{imgnum}_kernel11_sample\arabic{sample_id}.png}
						& \includegraphics[width=\cwidth]{Fig/examples/image\arabic{imgnum}_kernel15_sample\arabic{sample_id}.png}
						\\
					}  
				}\\[-2mm]
				% \toprule
				Input & $\alpha=0.9$ & $\alpha=0.7$ & $\alpha=0.5$ & $\alpha=0.3$ & $\alpha=0.1$ & $\alpha=0$ \\
				\includegraphics[width=\cwidth]{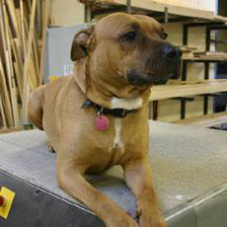} 
				&\includegraphics[width=\cwidth]{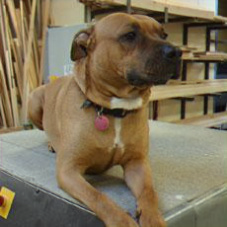} 
				&\includegraphics[width=\cwidth]{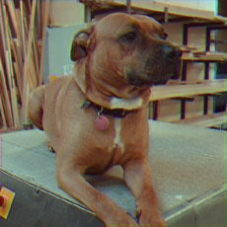} 
				&\includegraphics[width=\cwidth]{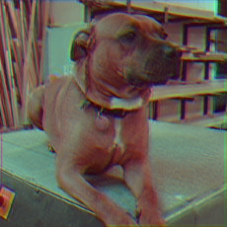} 
				&\includegraphics[width=\cwidth]{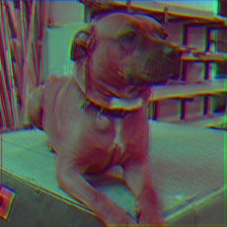} 
				&\includegraphics[width=\cwidth]{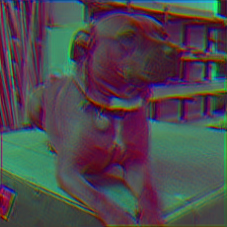} 
				&\includegraphics[width=\cwidth]{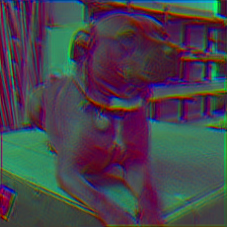} 
			\end{tabular}
		\end{adjustbox}
	\end{center}
	\vspace{-3mm}
	\caption{\small \textbf{Top}: Illustration that {\RandConv} randomize local texture but preserve shapes in the image. \textbf{Middle:} First column is the input image of size $224^2$; following columns are convolutions results using random filters of different sizes $k$. \textbf{Bottom:} Mixing results between an image and one of its random convolution results with different mixing coefficients $\alpha$.}  
	\vspace{-5mm}
	\label{fig:randconv_example}
\end{figure}

We make the following contributions:
\begin{itemize}[leftmargin=2em]
	\vspace{-0mm}
	\setlength\itemsep{0em}
% 	\item We provide justification that \emph{random convolutions preserve shape information}, based on the distance preserving property of random linear projections. The spatial extent of the convolution filter determines the scale at which shape information is maintained and local textures are perturbed. 
	\item We develop {\RandConv}, a data augmentation technique \emph{using multi-scale random-convolutions to generate images with random texture while maintaining global shapes.} We explore using the {\RandConv} output as training images or mixing it with the original images. We show that a consistency loss can further enforce invariance under texture changes.
	\item We provide insights and justification on why {\RandConv} augments images with different local texture but the same semantics with the shape-preserving property of random convolutions.
	\item We validate {\RandConv} and its mixing variant in extensive experiments on synthetic and real-world benchmarks as well as on the large-scale ImageNet dataset. Our methods outperform single domain generalization approaches by a large margin on digit recognition datasets and for the challenging case of generalizing to the Sketch domain in PACS and to ImageNet-Sketch.
	\item  We explore if the robustness/generalizability of a pretrained representation can transfer. We show that transferring a model pretrained with {\RandConv} on ImageNet can further improve domain generalization performance on new downstream tasks on the PACS dataset.
\end{itemize}

\section{Related Work}
\textbf{Domain Generalization} (DG) aims at learning representations that perform well when transferred to unseen domains. Modern techniques range between feature fusion~\citep{shen2019situational}, meta-learning~\citep{li2018mldg, balaji2018metareg}, and adversarial training~\citep{shao2019multi, li2018domain}. Note that most current DG work~\citep{ghifary2016scatter, li2018mldg, li2018domain} requires a multi-source training setting to work well. However, in practice, it might be difficult and expensive to collect data from multiple sources, such as collecting data from multiple medical centers~\citep{raghupathi2014big}. 
Instead, we consider the more strict single-domain generalization DG setting, where we train the model on source data from a single domain and generalize it to new unseen domains~\citep{carlucci2019jigen, wang2018learning}.

\textbf{Domain Randomization} (DR) was first introduced as a DG technique by~\citet{tobin2017domain} to handle the domain gap between simulated and real data. As the training data in~\citep{tobin2017domain} is synthesized in a virtual environment, it is possible to generate diverse training samples by randomly selecting background images, colors, lighting, and textures of foreground objects. When a simulation environment is not accessible, image stylization can be used to generate new domains~\citep{yue2019domain,geirhos2018imagenettrained}. However, this requires extra effort to collect data and to train an additional model; further, the number of randomized domains is limited by the number of predefined styles.

\textbf{Data Augmentation} has been widely used to improve the generalization of machine learning models~\citep{simard2003best}. DR approaches can be considered a type of synthetic data augmentation. To improve performance on unseen domains, \citet{volpi2018generalizing} generate adversarial examples to augment the training data;  \citet{qiao2020learning} extend this approach via meta-learning. As with other adversarial training 
algorithms, significant extra computation is required to obtain adversarial examples.

\textbf{Learning Representations Biased towards Global Shape}
\citet{geirhos2018imagenettrained} demonstrated that convolutional neural networks (CNNs) tend to use superficial local features even when trained on large datasets. To counteract this effect, they proposed to train on stylized ImageNet, thereby forcing a network to rely on object shape instead of textures. Wang et al. improved out-of-domain performance by penalizing the correlation between a learned representation and superficial features such as the gray-level co-occurrence matrix~\citep{wang2018learning}, or by penalizing the predictive power of local, low-level layer features in a neural network via an adversarial classifier~\citep{wang2019learning}. Our approach shares the idea that learning representations invariant to local texture helps generalization to unseen domains. However, {\RandConv} avoids searching over many hyper-parameters, collecting extra data, and training other networks. It also scales to large-scale datasets since it adds minimal computation overhead.

\textbf{Random Mapping in Machine Learning}
% Introducing randomness improves the robustness of neural networks, e.g.,  Dropout~\citep{srivastava2014dropout}. 
Random projections have also been effective for dimensionality reduction based on the distance-preserving property of the Johnson–Lindenstrauss lemma~\citep{johnson1984extensions}. \citep{vinh2016training} applied random projections on entire images as data augmentation to make neural networks robust to adversarial examples. \citet{lee2020network} recently used random convolutions to help reinforcement learning (RL) agents generalize to new environments. Neural networks with \textit{fixed} random weights can encode meaningful representations \citep{saxe2011random} and are therefore useful for neural architecture search~\citep{gaier2019weight}, generative models~\citep{he2016powerful}, natural language processing~\citep{wieting2018no}, and RL~\citep{osband2018randomized, burda2018exploration}. In contrast, {\RandConv} uses \textit{non-fixed} randomly-sampled weights to generate images with different local texture.

\section{RandConv: Randomize Local Texture at Different Scales }

We propose using a convolution layer with non-fixed random weights as the first layer of a DNN during training. This strategy generates images with random local texture but consistent shapes, and is beneficial for robust visual representation learning. %Sec.~\ref{sec:rc_shape_preserve} presents a theoretical bound on distance-preservation under random linear projections. This bound motivates shape-preservation under random convolutions, which we also illustrate empirically on real image data. 
Sec.~\ref{sec:rc_shape_preserve} justifies the shape-preserving property of a random convolution layer. 
Sec.~\ref{sec:randconv} describes {\RandConv}, our data augmentation algorithm using a multi-scale randomized convolution layer and input mixing.

\subsection{A Random Convolution Layer Preserves Global Shapes}
\label{sec:rc_shape_preserve}
Convolution is the key building block for deep convolutional neural networks.
Consider a convolution layer with filters $\mathbf{\Theta}\in\mathbb{R}^{h\times w \times C_{in}\times C_{out}}$ with an input image $\mathbf{I}\in\mathbb{R}^{H\times W\times C_{in}}$, where $H$ and $W$ are the height and width of the input and $C_{in}$ and $C_{out}$ are the number of feature channels for the input and output, and $h$ and $w$ are the height and width of the layer's filter. The output (with appropriate input padding) will be $\mathbf{g} = \mathbf{I} *  \mathbf{\Theta}$ with $\mathbf{g}\in\mathbb{R}^{H\times W\times C_{out}}$.  

 In images, nearby pixels with similar color or texture can be grouped into primitive shapes that represent parts of objects or the background. A convolution layer linearly projects local image patches to features at corresponding locations on the output map using shared parameters. While a convolution with random filters can project local patches to arbitrary output features, the output of a random linear projection approximately preserves relative similarity between input patches, proved in Appendix \ref{theorem_proof}. In other words, since any two locations within the same shape have similar local textures in the input image, they tend to be similar in the output feature map. Therefore, shapes that emerge in the output feature map are similar to shapes in the input image provided that the filter size is sufficiently small compared to the size of a typical shape.

In other words, the size of a convolution filter determines the smallest shape it can preserve. For example, 1x1 random convolutions preserve shapes at the single-pixel level and thus work as a random color mapping; large filters perturb shapes smaller than the filter size that are considered local texture of a shape at this larger scale. See Fig.~\ref{fig:randconv_example} for examples. \textit{More discussion and a formal proof are in Appendix \ref{sec:shapes} and \ref{theorem_proof}}. 

\subsection{Multi-scale Image Augmentation with a Randomized Convolution Layer}
\label{sec:randconv}

%\alglanguage{pseudocode}
\begin{algorithm}[h]
	\small
	\caption{Learning with Data Augmentation by Random Convolutions}
	\label{Algorithm:randconv}
	\begin{algorithmic}[1]
		\State \textbf{Input}: Model $\Phi$, task loss $\mathcal{L}_{task}$, training images $\{I_i\}_{i=1}^N$ and their labels $\{y_i\}_{i=1}^N$, pool of filter sizes $\mathcal{K}=\{1,...,n\}$, fraction of original data $p$, whether to $\mathtt{mix}$ with original images, consistency loss weight $\lambda$
		% 		\State \textbf{Hyperparameters}: 
		\Function{$\text{{\RandConv}}$}{I, $\mathcal{K}$, $\mathtt{mix}$, $p$}
		\State Sample $p_0 \sim U(0, 1)$
		\If {$p_0$ < $p$ and $\mathtt{mix}$ is False} 
		\State return $I$ \Comment{When not in $\mathtt{mix}$ mode, use the original image with probability $p$}		
		\Else 
		\State Sample scale $k$ $\sim \mathcal{K}$
		\State Sample convolution weights $\mathbf{\Theta}\in\mathbb{R}^{k\times k \times 3\times 3} \sim N(0, \frac{1}{3 k^2})$
		\State $I_{rc} = I*\mathbf{\Theta}$ \Comment{Apply convolution on $I$}
		\If{$\mathtt{mix}$ is True}
		\State Sample $\alpha \sim U(0, 1)$
		\State return $\alpha I + (1-\alpha)I_{rc}$ \Comment{Mix with original images}
		\Else
		\State return $I_{rc}$
		\EndIf
		\EndIf
		\EndFunction
		\State \textbf{Learning Objective}:
		\For {$i = 1 \to N$}
		\For {$j = 1 \to 3$} 
		% 			\State $G_{i}^j = \text{{\RandConv}}(I_i)$ 
		\State $\hat{y}_{i}^j = \Phi(\text{{\RandConv}}(I_i))$ \Comment{Predict labels for three augmented variants of the same image}
		
		\EndFor
		\State $\mathcal{L}_{cons} = \lambda\sum_{j=1}^{3}\text{KL}(\hat{y}^j_{i}|| \bar{y}_{i})$ where $\bar{y}_i = \sum_{j=1}^{3}\hat{y}_{i}^j/3$ \Comment{Consistency Loss}
		\State  $\mathcal{L} = \mathcal{L}_{task}(\hat{y}_{i}^1, y_i) + \lambda\mathcal{L}_{cons}$ \Comment{Learning with the task loss and the consistency loss}	
		\EndFor	
		% 		\State $\Phi^* = \underset{\Phi}{\rm argmin}\sum_{i=1}^{N} \mathcal{L}_i$ 
	\end{algorithmic}
	%	\vspace{-0.4cm}%
\end{algorithm}

Sec.~\ref{sec:rc_shape_preserve} discussed how outputs of randomized convolution layers approximately maintain shape information at a scale larger than their filter sizes. Here, we develop our {\RandConv} data augmentation technique using a randomized convolution layer with $C_{out}=C_{in}$ to generate shape-consistent images with randomized texture (see Alg.~\ref{Algorithm:randconv}). {Our goal is not to use {\RandConv} to parameterize or represent texture as in previous filter-bank based texture models~\citep{heeger1995pyramid, portilla2000parametric}. Instead, we only use the three-channel outputs of {\RandConv} as new images with the same shape and different “style” (loosely referred to as "texture"). We also note that, a convolution layer is different from a convolution operation in image filtering. Standard image filtering applies the same 2D filter on three color channels separately. In contrast, our convolution layer applies three different \emph{3D} filters and each takes all color channels as input and generates one channel of the output.}
Our proposed {\RandConv} variants are as follows:

\textbf{{$\text{RC}_{\text{img}}$}: Augmenting Images with Random Texture} A simple approach is to use the randomized convolution layer outputs, $I*\mathbf{\Theta}$, as new images; where $\mathbf{\Theta}$ are the randomly sampled weights and $I$ is a training image. %We name this approach $\text{RC}_{\text{img}}$.
If the original training data is in the domain $D^0$, a sampled weight $\mathbf{\Theta}_k$ generates images with consistent global shape but random texture forming the random domain $D^k$. Thus, by random weight sampling, we obtain an infinite number of random domains $D^1,D^1,\dots, D^\infty$. Input image intensities are assumed to be a standard normal distribution $N(0, 1)$ (which is often true in practice thanks to data whitening). As the outputs of {\RandConv} should follow the same distribution, we sample the convolution weights from $N(0, \sigma^2)$ where $\sigma =1/\sqrt{C_{in}\times h \times w}$, which is commonly applied for network initialization~\citep{he2015delving}. We include the original images for training at a ratio $p$ as a hyperparameter.

\textbf{{$\text{RC}_{\text{mix}}$}: Mixing Variant}
As shown in Fig.~\ref{fig:randconv_example}, outputs from $\text{RC}_{\text{img}}$ can vary significantly from the appearance of the original images. Although generalizing to domains with significantly different local texture distributions is useful, we may not want to sacrifice much performance on domains similar to the training domain. Inspired by the AugMix ~\citep{hendrycks2020augmix} strategy, we propose to blend the original image with the outputs of the {\RandConv} layer via linear convex combinations $\alpha I + (1-\alpha)(I*\mathbf{\Theta})$, where $\alpha$ is the mixing weight uniformly sampled from $[0,1]$.%We name this approach as $\text{RC}_{\text{mix}}$.
In $\text{RC}_{\text{mix}}$, the {\RandConv} outputs provide shape-consistent perturbations of the original images. Varying $\alpha$, we continuously interpolate between the training domain and the randomly sampled domains of \texttt{$\text{RC}_{\text{img}}$}.

\textbf{Multi-scale Texture Corruption} 
% As discussed in Sec.~\ref{sec:rc_shape_preserve}, the distance between a spatial pair of {\RandConv} outputs depends on the distance between the local input image patches; their size in turn is controlled by convolution filter size. %When using large filters, image pattern in the size smaller than the filter will be corrupted by the random projection.
As discussed in Sec.~\ref{sec:rc_shape_preserve},, image shape information at a scale smaller than a filter's size will be corrupted by {\RandConv}. Therefore, we can use filters of varying sizes to preserve shapes at various scales.
%to control at which the scale the shape in the original image is preserved. 
% For example, using 1x1 convolution will keep the most precise shape at the single-pixel level; using a kernel with size 7x7 will make the textures composed by shapes that are smaller than 7 corrupted by random linear projections. 
We choose to uniformly randomly sample a filter size $k$ from a pool $\mathcal{K}={1,3,...n}$ before sampling convolution weights $\mathbf{\Theta}\in\mathbb{R}^{k\times k \times C_{in}\times C_{out}}$ from a Gaussian distribution $N(0, \frac{1}{k^2C_{in}})$. Fig.~\ref{fig:randconv_example} shows examples of multi-scale {\RandConv} outputs.

\textbf{Consistency Regularization} To learn representations invariant to texture changes, we use a loss encouraging consistent network predictions for the same {\RandConv}-augmented image for different random filter samples. Approaches for transform-invariant domain randomization~\citep{yue2019domain}, data augmentation~\citep{hendrycks2020augmix}, and semi-supervised learning~\citep{berthelot2019mixmatch} use similar strategies. We use Kullback-Leibler (KL) divergence to measure consistency. However, enforcing prediction similarity of two augmented variants may be too strong. Instead, following ~\citep{hendrycks2020augmix}, we use {\RandConv} to obtain 3 augmentation samples of image $I$: $G_j = \text{{\RandConv}}^j(I)$ for $j=1,2,3$ and obtain their predictions with a model $\Phi$: $y^j = \Phi(G^j)$. We then compute the \emph{relaxed} loss as $\lambda\sum_{j=1}^{3}\text{KL}(y^j|| \bar{y})$, where $\bar{y} = \sum_{j=1}^{3}y^j/3$ is the sample average.

\section{Experiments}
Secs.~\ref{section:digits} to~\ref{section:imagenet-sketch} evaluate our methods on the following datasets: multiple digit recognition datasets, PACS, and ImageNet-sketch. Sec.~\ref{section:revisit_pacs} uses PACS to explore the out-of-domain generalization of a pretrained representation in transfer learning by checking if pretraining on ImageNet with our method improves the domain generalization performance in downstream tasks. All experiments are in the single-domain generalization setting where training and validation sets are drawn from one domain. \textit{Additional experiments with ResNet18 as the backbone are given in the Appendix}.
%More implementation details and results are in the Appendix.

\subsection{Digit Recognition}
\label{section:digits}

The five digit recognition datasets (MNIST~\citep{lecun1998gradient}, MNIST-M~\citep{ganin2016domain}, SVHN~\citep{netzer2011reading}, SYNTH~\citep{ganin2014unsupervised} and USPS \citep{denker1989neural}) have been widely used for domain adaptation and generalization research~\citep{peng2019moment, Peng2019DomainAL,qiao2020learning}. Following the setups in~\citep{volpi2018generalizing} and~\citep{qiao2020learning}, we train a simple CNN with \emph{10,000} MNIST samples and evaluate the accuracy on the test sets of the other four datasets. We also test on MNIST-C~\citep{mu2019mnist}, a robustness benchmark with \emph{15 common corruptions} of MNIST and report the average accuracy over all corruptions.

\begin{figure}[b]
% 	\vspace{-4mm}
	\begin{center}
		\setlength{\tabcolsep}{0.00cm}
		\newcommand\cwidth{0.4\textwidth}
		\begin{adjustbox}{max width=\textwidth}
			\begin{tabular}{ccc}
				% \rotatebox[origin=c]{90}{image} &
				\includegraphics[width=\cwidth]{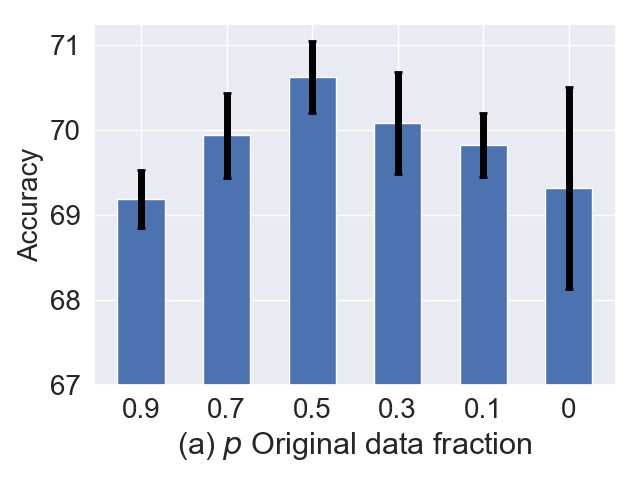} &
				\includegraphics[width=\cwidth]{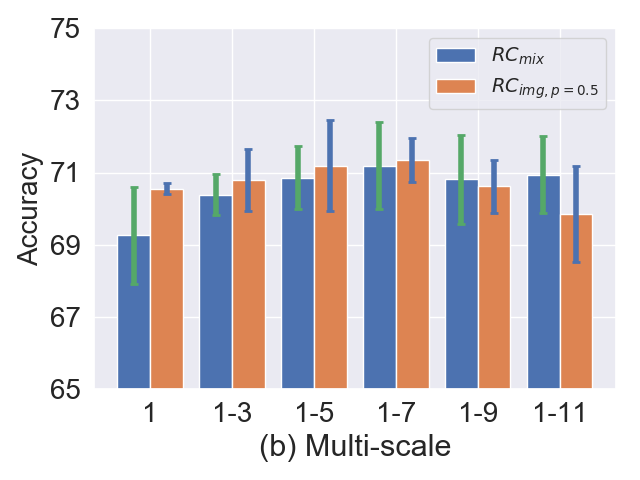} &
				\includegraphics[width=\cwidth]{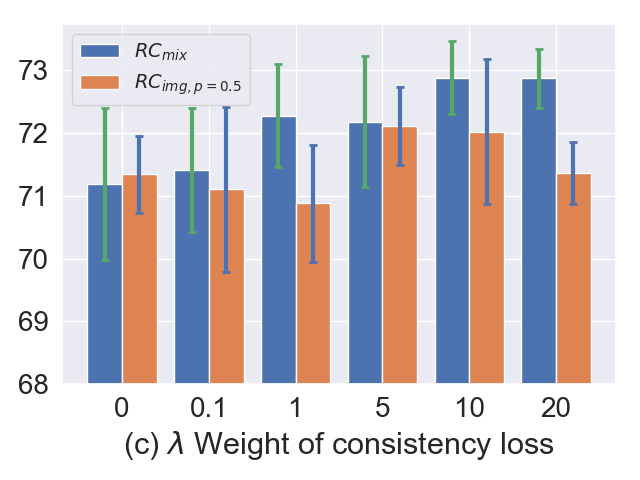}\\
			\end{tabular}
		\end{adjustbox}
	\end{center}
	\vspace{-4mm}
	\caption{Average accuracy and 5-run variance of MNIST model on MNIST-M, SVHN, SYNTH and USPS. Studies for: (a) original data fraction $p$ for $\text{RC}_{\text{img}}$; (b) multiscale design (1-n refers to using scales {1,3,..,n}) for $\text{RC}_{\text{img},p=0.5}$ (orange) and $\text{RC}_{\text{mix}}$ (blue); (c) consistency loss weight $\lambda$ for  $\text{RC}_{\text{img}1-7,p=0.5}$ (orange) and $\text{RC}_{\text{mix}1-7}$ (blue).}  
% 	\vspace{-3mm}
	\label{fig:digit_ablation}
\end{figure}

\textbf{Selecting Hyperparameters and Ablation Study.} Fig. \ref{fig:digit_ablation}(a) shows the effect of the hyperparameter $p$ on $\text{RC}_{\text{img}}$ with filter size 1. We see that  adding only $10\%$ {\RandConv} data ($p=0.9$) immediately improves the average performance (DG-Avg) on MNIST-M, SVHN, SYNTH and USPS performance from 53.53 to 69.19, outperforming all other approaches (see Tab.~\ref{table:digits_compare}) for every dataset. %Dor later studies on $\text{RC}_{\text{img}}$
We choose $p=0.5$, which obtains the best DG-Avg. Fig.~\ref{fig:digit_ablation}(b) shows results for a multiscale ablation study. Increasing the pool of filter sizes up to $7$ improves DG-Avg performance. Therefore we use multi-scale $1\mhyphen7$ to study the consistency loss weight $\lambda$, shown in Fig. ~\ref{fig:digit_ablation}(c). Adding the consistency loss improves both {\RandConv} variants on 
DG-avg: $\text{RC}_{\text{mix}1-7}$ favors $\lambda=10$ while $\text{RC}_{\text{img}1-7,p=0.5}$ performs similarly for $\lambda=5$ and $\lambda=10$. We choose $\lambda=10$ for all subsequent experiments.

\textbf{Results.} Tab.~\ref{table:digits_compare} compares the performance of $\text{RC}_{\text{img}1-7,p=0.5,\lambda=10}$ and $\text{RC}_{\text{mix}1-7,\lambda=10}$ with other state-of-the-art approaches. We show results of the adversarial training based methods GUD~\citep{volpi2018generalizing}, M-ADA~\citep{qiao2020learning}, and PAR~\citep{wang2019learning}. The baseline model is trained only on the standard classification loss. 
To show {\RandConv} is more than a trivial color/contrast adjustment method, we also compare to ColorJitter\footnote{See PyTorch documentation for implementation details; all parameters are set to 0.5.} data augmentation (which randomly changes image brightness, contrast, and saturation) and GreyScale (where images are transformed to grey-scale for training and testing). {We also tested data augmentation with a fixed Laplacian of Gaussian filter (Band-Pass) of size=3 and $\sigma=1$ and the data augmentation pipeline (Multi-Aug) that was used in a recently proposed large scale study on domain generalization algorithms and datasets \citep{gulrajani2020search}. } {\RandConv} and its mixing variant outperforms the best competing method (M-ADA) by 17\% on DG-Avg and achieves the best 91.62\% accuracy on MNIST-C. While the difference between the two variants of {\RandConv} is marginal, $\text{RC}_{\text{mix}1-7,\lambda=10}$ performs better on both DG-Avg and MNIST-C. {When combined with Multi-Aug, {\RandConv} achieves improved performance except on MNIST-C.}
Fig~\ref{fig:feat_tsne} shows t-SNE image feature plots for unseen domains generated by the baseline approach and $\text{RC}_{\text{mix}1-7,\lambda=10}$. The {\RandConv} embeddings suggest better generalization to unseen domains.

\begin{table}[th]
% 	\vspace{-2mm}
	\small
	\setlength{\tabcolsep}{4pt}
	\caption{Average accuracy and 5-run standard deviation (in parenthesis) of MNIST10K model on MNIST-M, SVHN, SYNTH, USPS and their average (DG-avg); and average accuracy of 15 types of corruptions in MNIST-C. Both {\RandConv} variants significantly outperform all other methods.}
	\label{table:digits_compare}
	\centering
	\begin{tabular}{l|l|lllll|l}
		\toprule
		& MNIST   & MNIST-M    & SVHN        & USPS        & SYNTH       & DG-Avg         & MNIST-C     \\
		\midrule
		Baseline    & 98.40\tiny(0.84) & 58.87\tiny(3.73) & 33.41\tiny(5.28) & 79.27\tiny(2.70) & 42.43\tiny(5.46) & 53.50\tiny(4.23) & 88.20\tiny(2.10) \\
		
		GreyScale   & 98.82\tiny(0.02) & 58.41\tiny(0.99)          & 36.06\tiny(1.48)          & 80.45\tiny(1.00)          & 45.00\tiny(0.80)          & 54.98\tiny(0.86)          & 89.15\tiny(0.44)          \\
		
		ColorJitter   & 98.72\tiny(0.05) & 62.72\tiny(0.66)          & 39.61\tiny(0.88)          & 79.18\tiny(0.60)          & 46.40\tiny(0.34)          & 56.98\tiny(0.39)          & 89.48\tiny(0.18)          \\
		
		{BandPass} & 98.65\tiny(0.11) &70.22\tiny(2.73) &	48.34\tiny(2.56)	& 78.60\tiny(0.82) &	57.17\tiny(2.01) &	63.58\tiny(1.89)	& 87.89\tiny(0.68) \\
		
		{MultiAug} & 98.80\tiny(0.05)	&62.32\tiny(0.66)	& 39.07\tiny(0.68)	&79.31\tiny(1.02)	 &46.48\tiny(0.80)&	56.79\tiny(0.34)	& 89.54\tiny(0.11) \\
		
		PAR (our imp)       & 98.79\tiny(0.05) & 61.16\tiny(0.21) & 36.08\tiny(1.27) & 79.95\tiny(1.18) & 45.48\tiny(0.35) & 55.67\tiny(0.33) & 89.34\tiny(0.45) \\

		GUD         & -           & 60.41       & 35.51       & 77.26       & 45.32       & 54.62       & -           \\
		M-ADA       & -           & 67.94       & 42.55       & 78.53       & 48.95       & 59.49       & -           \\
		\midrule

		% 		$\text{RC}_{\text{img}{1\mhyphen7}}$\tiny, $p$=0.5, $\lambda$=5 & 98.85(0.08) & 87.33(0.81)          & {56.74(1.44)} & {83.14(1.05)} & \textbf{63.56(1.03)} & {72.69(0.76)} & 90.60(0.80)                \\
		$\text{RC}_{\text{img}{1\mhyphen7}}$\tiny, $p$=0.5, $\lambda$=5    & 98.86\tiny(0.05) & 87.67\tiny(0.37)          & 54.95\tiny(1.90)          & 82.08\tiny(1.46)          & {63.37\tiny(1.58)}          & 72.02\tiny(1.15)          & 90.94\tiny(0.51)\\
		$\text{RC}_{\text{mix}1\mhyphen7,\lambda=10}$ & 98.85\tiny(0.04) & 87.76\tiny(0.83) & {57.52\tiny(2.09)} & {83.36\tiny(0.96)} & 62.88\tiny(0.78) & {72.88\tiny(0.58)} & \textbf{91.62\tiny(0.77)} \\
		{$\text{RC}_{\text{mix}1\mhyphen7,\lambda=10}$ + \scriptsize MultiAug} & 98.82\tiny(0.06) & \textbf{87.89\tiny(0.29)} & \textbf{62.07\tiny(0.62)} & \textbf{84.39\tiny(1.02)} & \textbf{63.90\tiny(0.63)} & \textbf{74.56\tiny(0.46)} &  91.40\tiny(0.93) \\
		
		\bottomrule
	\end{tabular}
% 	\vspace{-4mm}
\end{table}

\begin{figure}[th]
	\begin{center}
		\small
		\newcommand\cwidth{0.25\textwidth}
		\begin{adjustbox}{max width=\textwidth}
			\begin{tabular}{cccc}
				% (a)  &  (b)  & (c)  \\
				MNIST-M    & SVHN        & USPS   & SYNTH \\
				% \small Baseline 
				\includegraphics[width=\cwidth]{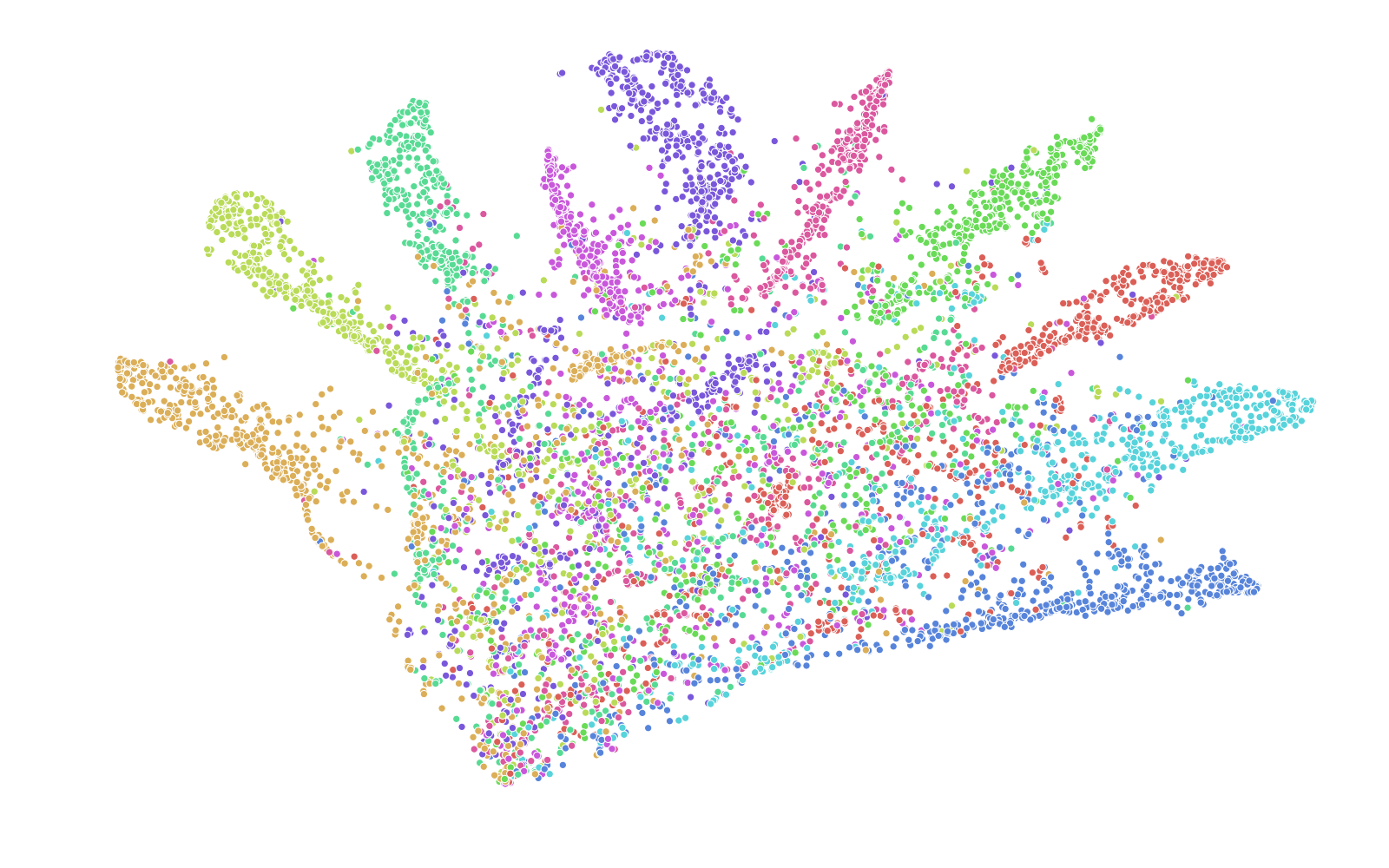}
				& \includegraphics[width=\cwidth]{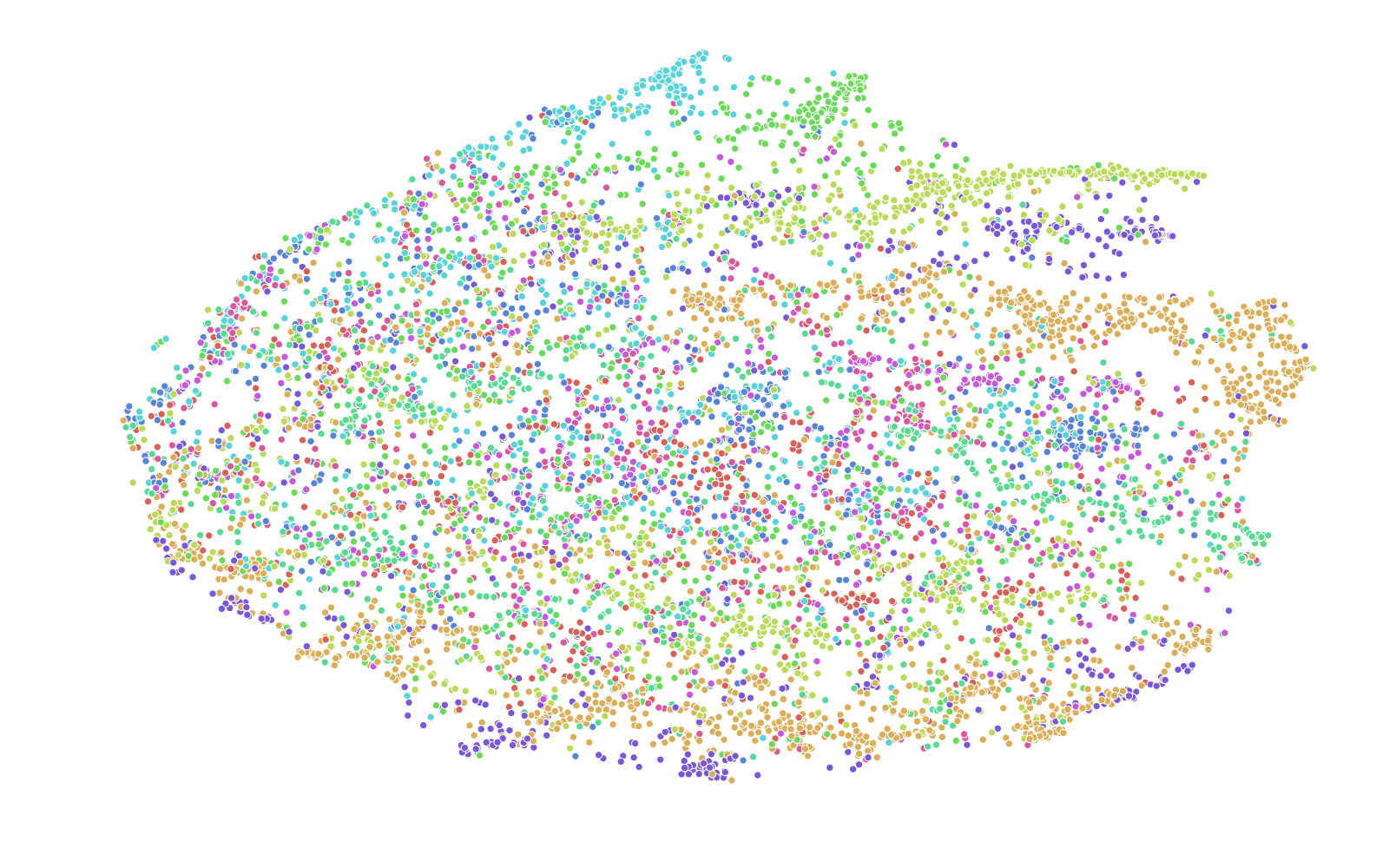}
				& \includegraphics[width=\cwidth]{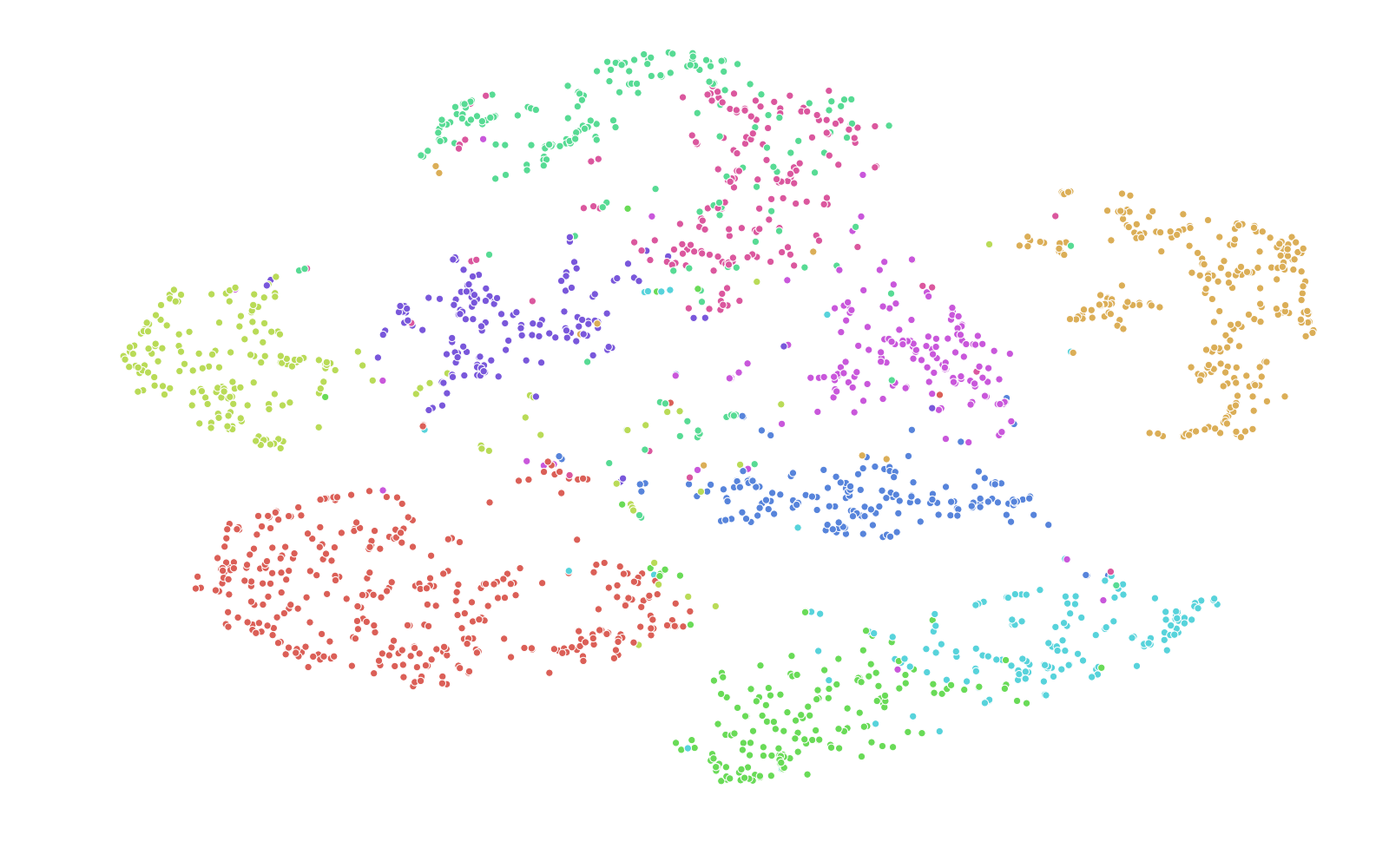}
				& \includegraphics[width=\cwidth]{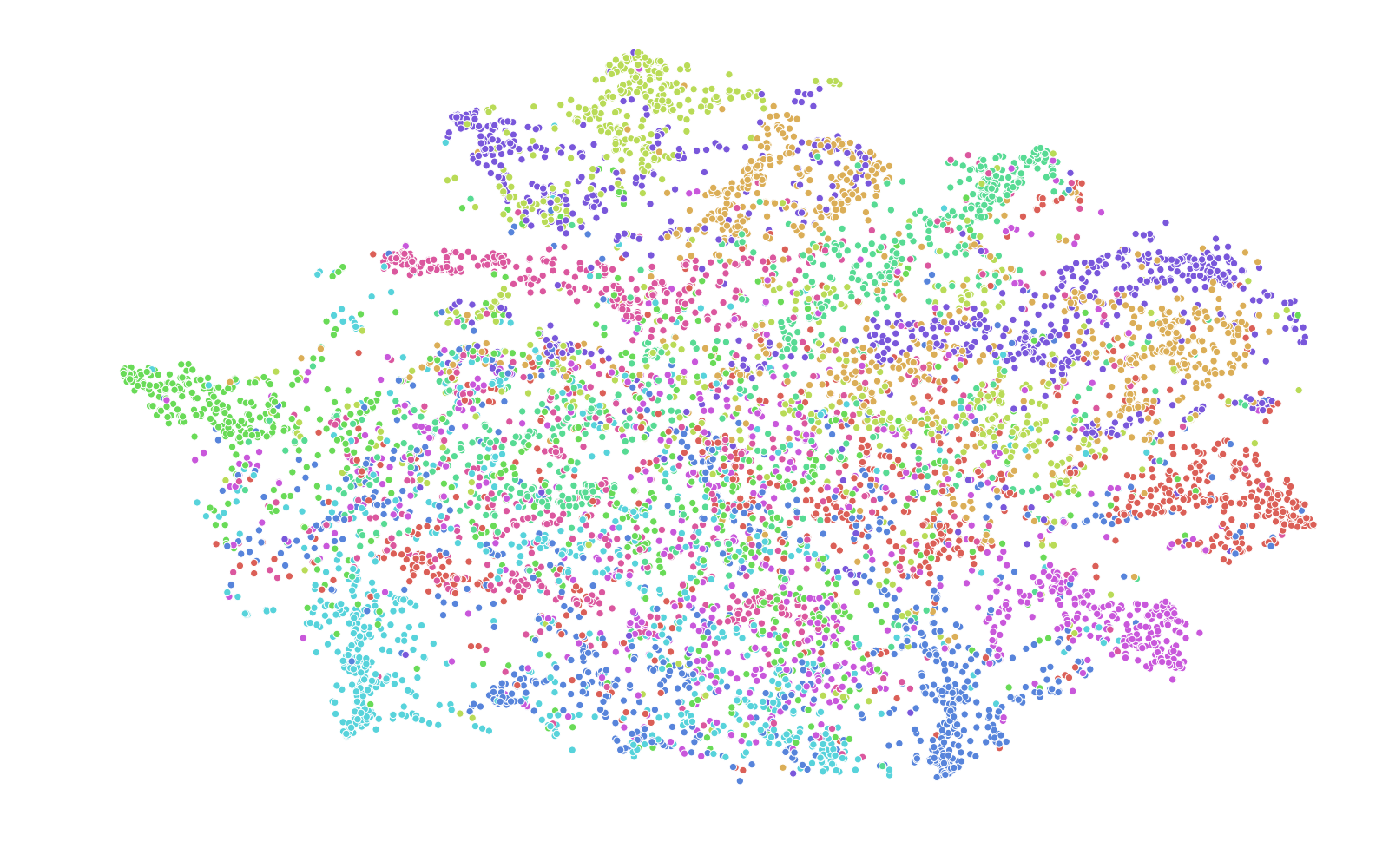}
				\\
				%  $\text{RC}_{\text{mix}1\mhyphen7, \lambda=10}$
				\includegraphics[width=\cwidth]{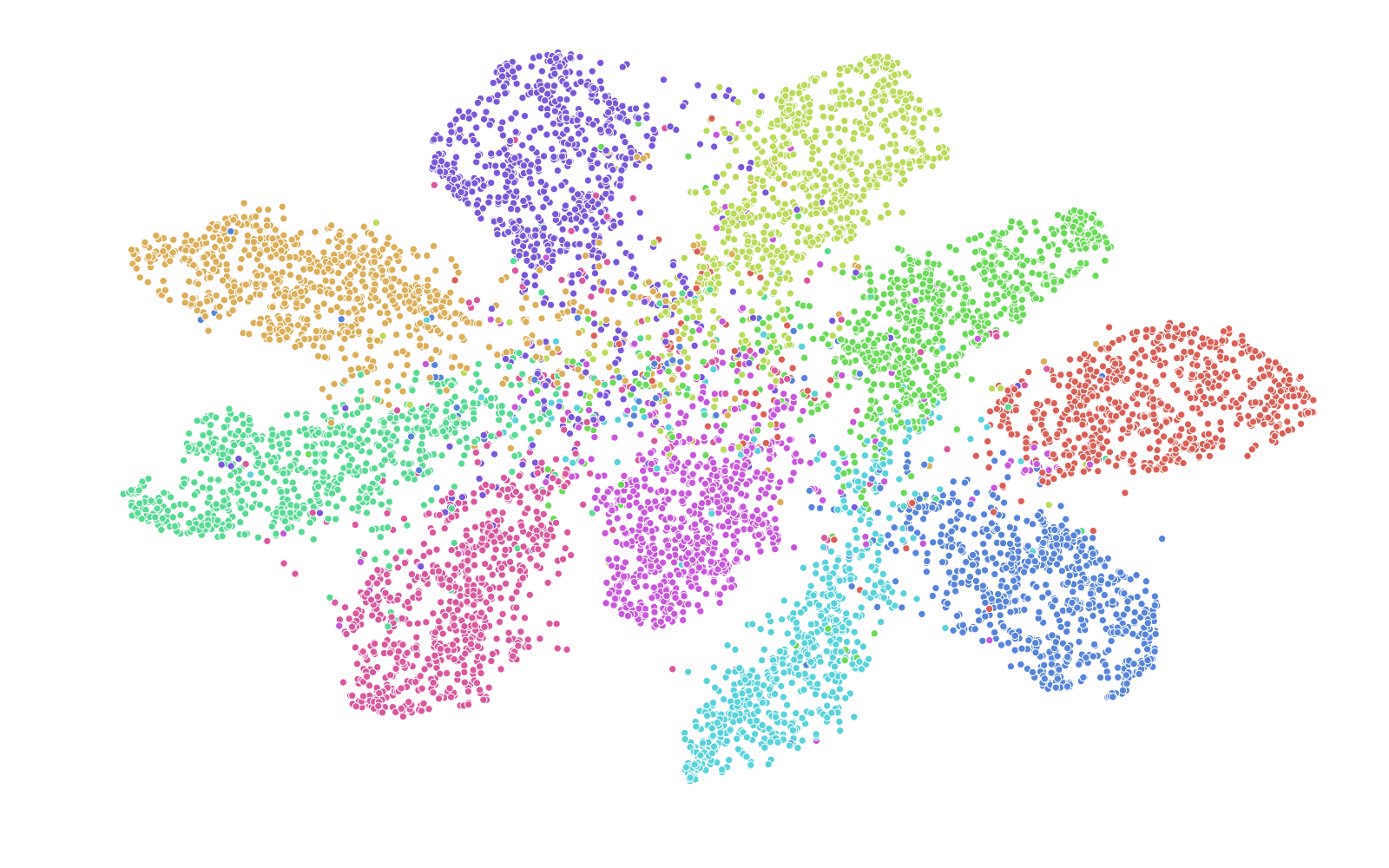}
				& \includegraphics[width=\cwidth]{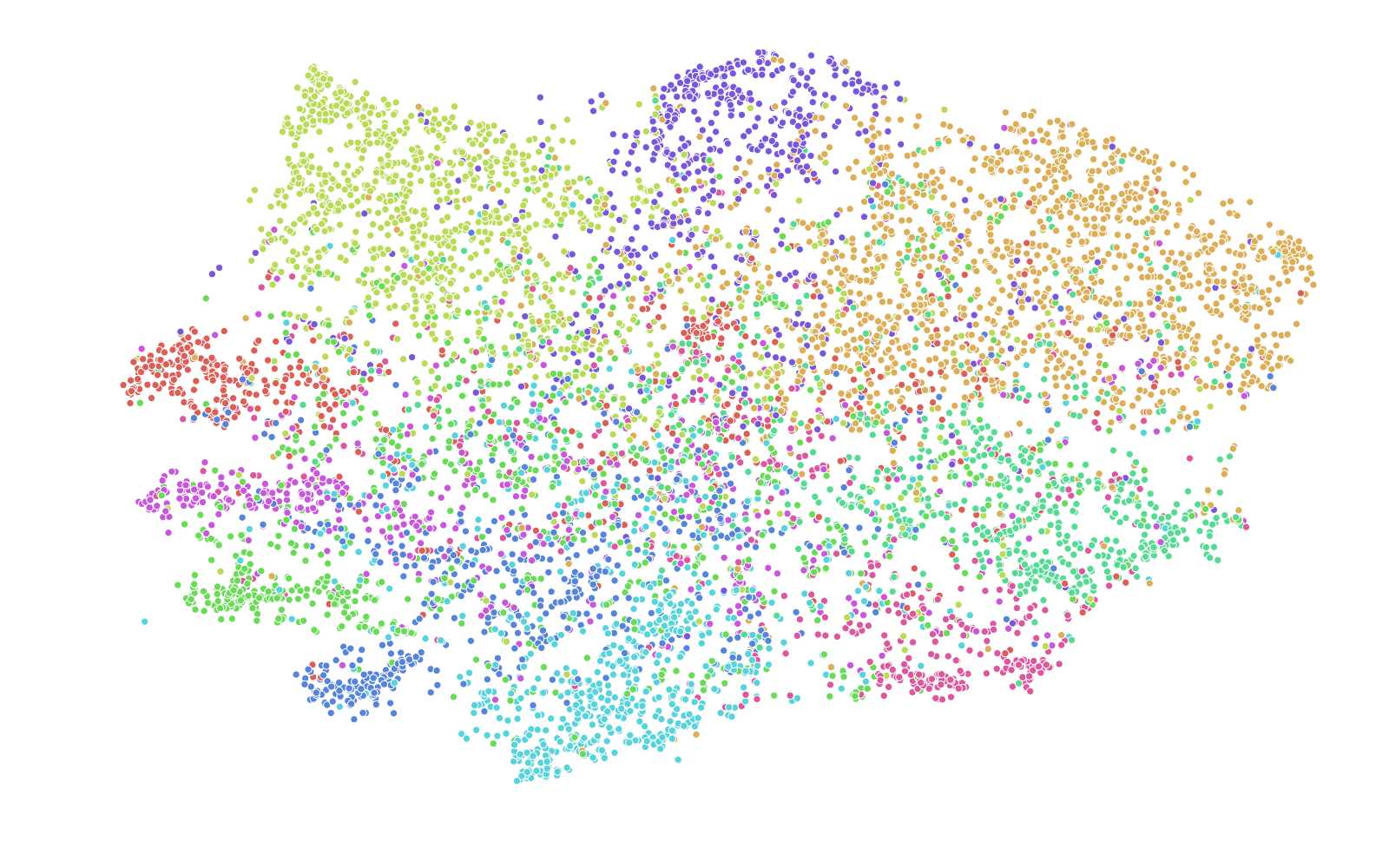}
				& \includegraphics[width=\cwidth]{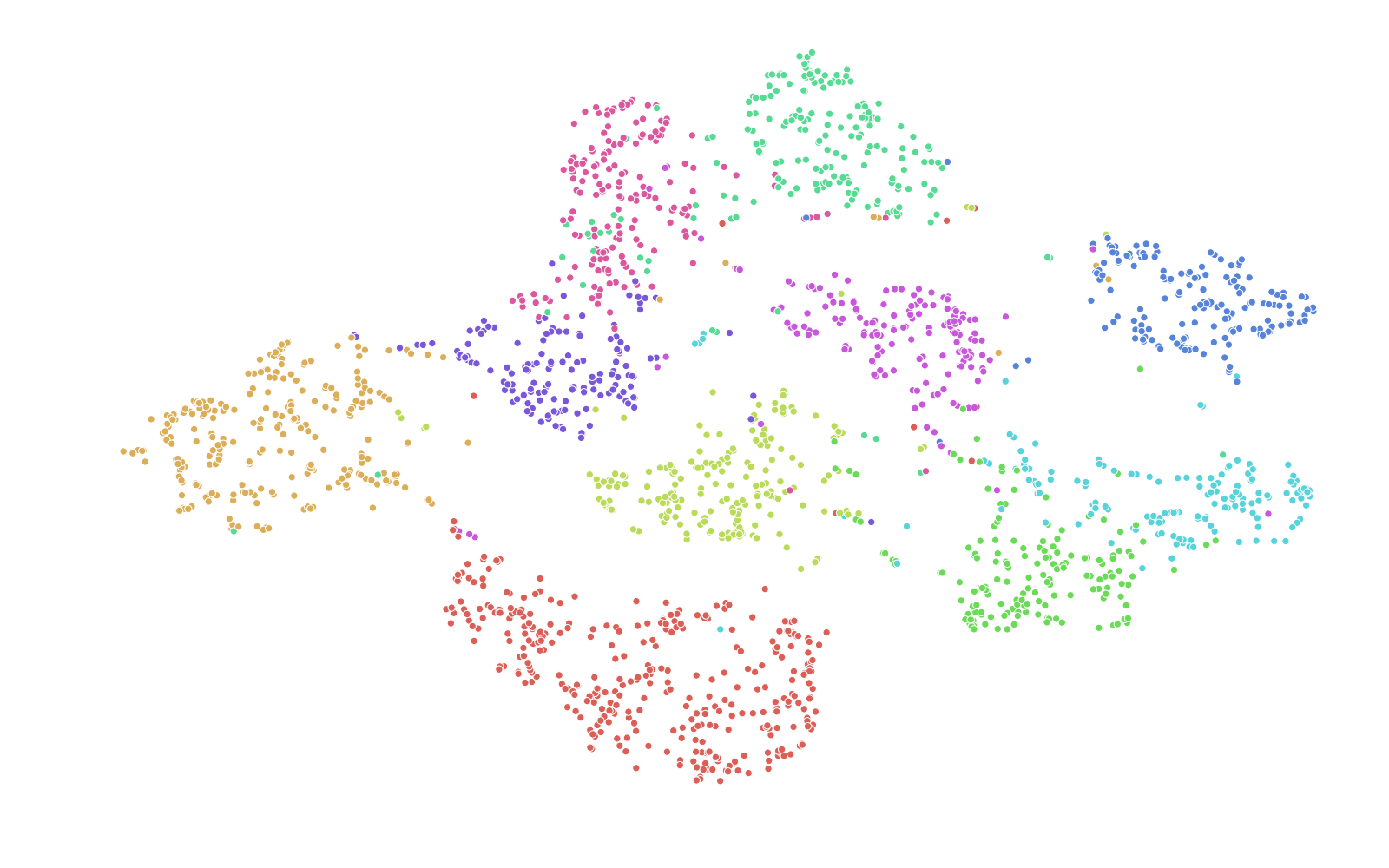}
				& \includegraphics[width=\cwidth]{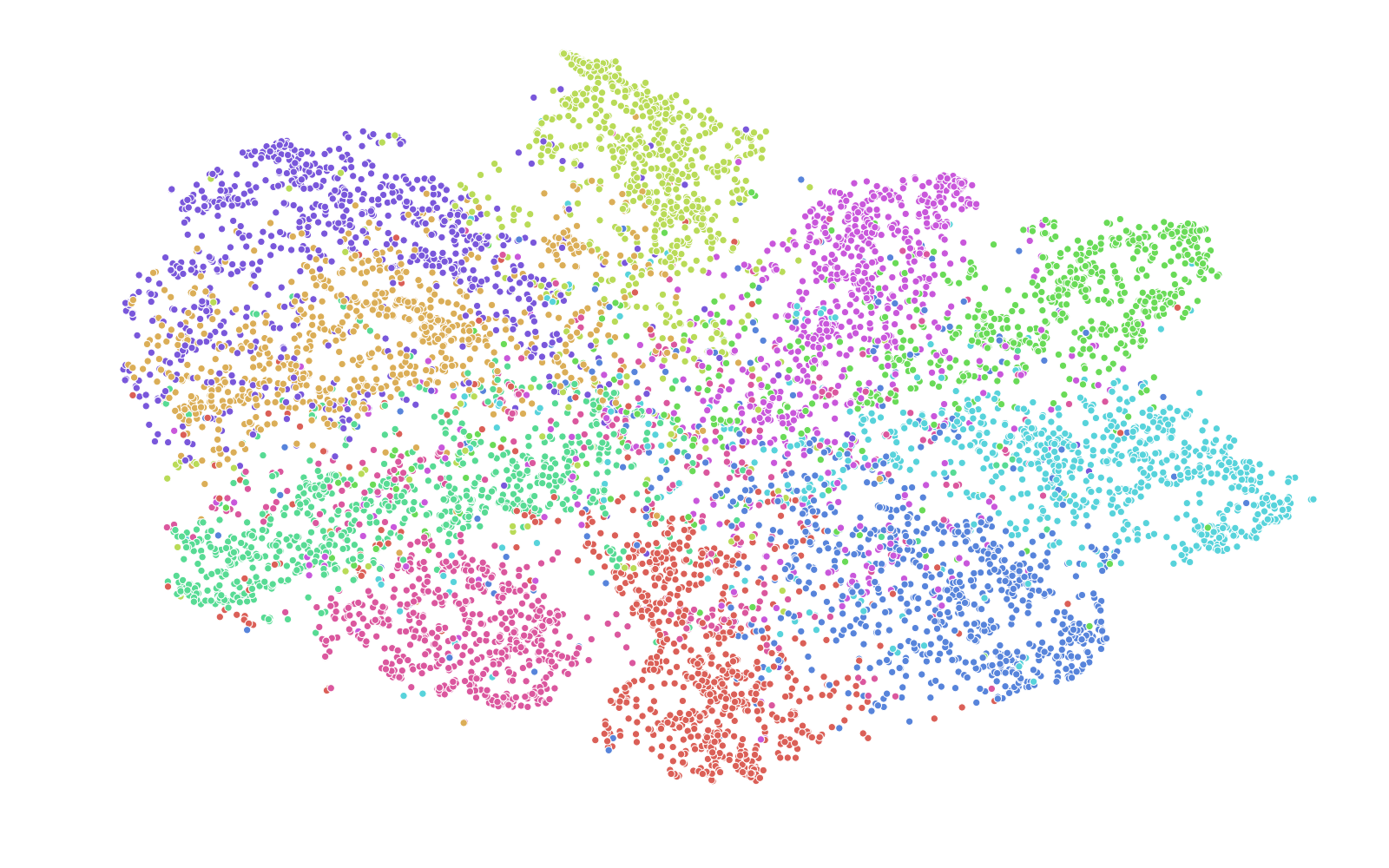}
				\\
			\end{tabular}
		\end{adjustbox}
	\end{center}
% 	\vspace{-3mm}
	\caption{t-SNE feature embedding visualization for digit datasets for models trained on MNIST without (top) and with our $\text{RC}_{\text{mix}{1\mhyphen7},\lambda=10}$ approach (bottom). Different colors denote different classes.}  
% 	\vspace{-2mm}
	\label{fig:feat_tsne}
\end{figure}

\begin{table}[t]
	\small
	\caption{Mean and 5-run standard deviation (in parenthesis) results for domain generalization on PACS. Best results with our Deep-All baseline are in \textbf{bold}. The  domain name in each column represents the target domain. {Base column indicates different baselines and results under different baselines are not directly comparable. MLDG and CIDDF used domain labels for training.}}
	\label{table:PACS}
	\centering
	\begin{tabular}{c|lccccc}
		\toprule
		Base & Method & Photo & Art & Cartoon & Sketch & Average \\
		\toprule
		\multirow{10}{*}{\centering Ours}& Deep-All     & 86.77\tiny(0.42) & 60.11\tiny(1.33) & 64.12\tiny(0.32) & 55.28\tiny(4.71) & 66.57\tiny(1.36) \\
		
		& GreyScale        & 83.93\tiny(1.47) & 61.60\tiny(1.18) & 62.12\tiny(0.61) & 60.07\tiny(2.47) & 66.93\tiny(0.83) \\
		
		& ColorJitter & 84.61\tiny(0.83) & 59.01\tiny(0.24) & 61.43\tiny(0.68) & 62.44\tiny(1.68) & 66.88\tiny(0.33) \\
		
		&{BandPass} & 87.08\tiny(0.57) & 	59.46\tiny(0.27)	& \textbf{64.39\tiny(0.51)}	& 55.39\tiny(2.95)&	66.58\tiny(0.73) \\
		
		&{MultiAug} & 85.21\tiny(0.47)	& 59.51\tiny(0.38)&	62.88\tiny(1.01)&	61.67\tiny(0.76)&	67.32\tiny(0.23) \\
		
		&PAR (our imp.) & \textbf{87.21\tiny(0.42)} & 60.17\tiny(0.95)  & 63.63\tiny(0.88)  & 55.83\tiny(2.57) & 66.71\tiny(0.58)\\
		\cmidrule{2-7}
		
		% 			&$\text{RC}_{\text{img}{1\mhyphen7}}$\tiny, $p$=0    & 81.59(0.77) & 58.01(0.80) & 60.55(1.04) &  \textbf{72.60(0.64)} & 68.19(0.44) \\
		% 			&$\text{RC}_{\text{img}{1\mhyphen7}}$\tiny, $p$=0, $\lambda$=10     & 77.86(0.51) & 58.23(0.51) & 61.08(0.50) & 72.18(0.50) & 67.34(0.21) \\
		&$\text{RC}_{\text{img}{1\mhyphen7}}$\tiny, $p$=0.5 & 86.50\tiny(0.72) & 61.10\tiny(0.38) &  {64.24\tiny(0.62)} & 68.50\tiny(1.83) & 70.09\tiny(0.43) \\
		
		&$\text{RC}_{\text{mix}{1\mhyphen7}}$ & 86.60\tiny(0.67) &  {61.74\tiny(0.90)} & 64.05\tiny(0.66) & 69.74\tiny(0.66) &  \textbf{70.53\tiny(0.25)} \\
		
		& {$\text{RC}_{\text{mix}{1\mhyphen7}}$ + \scriptsize MultiAug } & 86.23\tiny(0.74)&	\textbf{61.91\tiny(0.76)}&	62.69\tiny(0.76)&	67.74\tiny(1.21)&	69.64\tiny(0.49) \\
		
		&$\text{RC}_{\text{img}{1\mhyphen7}}$\tiny, $p$=0.5, $\lambda$=10  & 81.15\tiny(0.76) & 59.56\tiny(0.79) & 62.42\tiny(0.59) & 71.74\tiny(0.43) & 68.72\tiny(0.58) \\	
		
		&$\text{RC}_{\text{mix}{1\mhyphen7}}$\tiny,$\lambda$=10 & 81.78\tiny(1.11) & 61.14\tiny(0.51) & 63.57\tiny(0.29) & \textbf{71.97\tiny(0.38)} & 69.62\tiny(0.24) \\

		%		\bottomrule
		\midrule
		\multicolumn{7}{c}{{Results below are not directly comparable due to different Deep-All implementations.}} \\
		\midrule
		\multirow{3}{*}{\scriptsize	\citet{wang2019learning}} & Deep-All (our run) & 88.40 & 66.26 & 66.58 & 59.40 & 70.16 \\
		& PAR (our run) & 88.40 & 65.19 & 68.58  & 61.86 & 71.10 \\
		& PAR (reported) & 89.6 & 66.3 & 68.3 & 64.1 & 72.08 \\
		%		\bottomrule
		\midrule
		\multirow{2}{*}{\scriptsize\citet{carlucci2019jigen}} & Deep-All & 89.98 & 66.68 & 69.41 & 60.02  & 71.52 \\
		& Jigen & 89.00 & 67.63& 71.71& 65.18 & 73.38 \\
		%				& PAR & 90.4 & 68.7 & 70.5  & 64.6 & 73.54 \\
		
		\midrule
		\multirow{2}{*}{\scriptsize\citet{li2018mldg}} & Deep-All &  86.67 & 64.91 & 64.28 & 53.08  & 67.24 \\
		& MLDG (\tiny use domain labels) & 88.00& 66.23 & 66.88 & 58.96  & 70.01 \\
		%				& PAR & 90.4 & 68.7 & 70.5  & 64.6 & 73.54 \\
		% 			\toprule
		% 			\multirow{2}{*}{\citep{li2017deeper}} & Deep All &  87.70  & 63.30 &63.13 &54.07&  67.05 \\
		% 			& TF* & 89.50 & 62.86 &66.97 &57.51 &69.21 \\
		\midrule
		\multirow{2}{*}{\scriptsize\citet{li2018ciddg}} & Deep-All& 77.98  &  57.55  & 67.04  & 58.52  & 65.27 \\
		& CIDDG (\tiny use domain labels)& 78.65 & 62.70 & 69.73 & 64.45  & 68.88 \\
		\bottomrule
	\end{tabular}
% 	\vspace{-5mm}
\end{table}

\subsection{PACS Experiments}

\label{section:PACS}
The PACS dataset~\citep{li2018domain} considers 7-class classification on 4 domains: photo, art painting, cartoon, and sketch, with very different texture styles. Most recent domain generalization work studies the multi-source domain setting on PACS and uses domain labels of the training data. Although we follow the convention to train on 3 domains and to test on the fourth, we simply pool the data from the 3 training domains as in~\citep{wang2019learning}, without using domain labels during the training. 

\textbf{Baseline and State-of-the-Art}. Following~\citep{li2017deeper}, we use Deep-All as the baseline, which finetunes an ImageNet-pretrained AlexNet on 3 domains using only the classification loss and tests on the fourth domain. We test our {\RandConv} variants  $\text{RC}_{\text{img}{1\mhyphen7},p=0.5}$ and $\text{RC}_{\text{mix}{1\mhyphen7}}$ with and without consistency loss, and ColorJitter/GreyScale/{BandPass/MultiAug} data augmentation as in the digit datasets. We also implemented PAR~\citep{wang2019learning} using our baseline model. { $\text{RC}_{\text{mix}{1\mhyphen7}}$ combined with MultiAug is also tested.}
Further, we compare to the following state-of-the-art approaches: Jigen~\citep{carlucci2019jigen} using self-supervision, MLDG~\citep{li2018mldg} using meta-learning, 
% 	the low-rank parameterized network proposed with the PACS data TF~\citep{li2017deeper} 
and the conditional invariant deep domain generalization method CIDDG~\citep{li2018ciddg}. Note that {previous methods used different Deep-All baselines which make the final accuracy not directly comparable, and} MLDG and CIDDG use domain labels for training.

\textbf{Results.} Tab.~\ref{table:PACS} shows \emph{significant improvements on Sketch} for both {\RandConv} variants. Sketch is the most challenging domain with no color and much less texture compared to the other 3 domains. The success on Sketch demonstrates that our methods can guide the DNN to learn global representations focusing on shapes that are robust to texture changes. Without using the consistency loss, $\text{RC}_{\text{mix}{1\mhyphen7}}$ achieves the best overall result improving over Deep-All by $\sim$4\% {but adding MultiAug does not further improve the performance}.
% 	The variant with strongest texture-invariance regularization, $\text{RC}_{\text{img}{1\mhyphen7}, p=0}$, get the best performance on Sketch, while sacrificing a some performance on other three domains.
Adding the consistency loss with $\lambda=10$, $\text{RC}_{\text{mix}{1\mhyphen7}}$ and $\text{RC}_{\text{img}{1\mhyphen7}, p=0.5}$ performs better on Sketch but degrades performance on the other 3 domains, so do GreyScale and ColorJitter. \textit{This observation will be discussed in Sec~\ref{section:revisit_pacs}}.

% $\text{RC}_{\text{mix}{1\mhyphen7}}$ and $\text{RC}_{\text{img}{1\mhyphen7}, p=0.5}$ make the trade off to maintain the performance on domains similar to training data with slightly less performance for generalizing to significant different domain. 

\subsection{{Generalizing an ImageNet Model to ImageNet-Sketch}}
\label{section:imagenet-sketch}

\begin{table}[htp]
	\vspace{-4mm}
	\small
	\caption{Accuracy of ImageNet-trained AlexNet on ImageNet-Sketch (IN-S) data. Our methods outperform PAR by 5\% {and are on par with a Stylized-ImageNet (SIN) trained model. Note that PAR was built on top of a stronger baseline than our model, and both PAR and SIN fine-tuned the baseline model which helped the performance, while we train \texttt{RandConv} model from scratch.}}
	\label{table:imagenet}
	\centering
	\begin{tabular}{c|cc|ccc|c}
		\toprule
		      & \multirow{2}{*}{\shortstack{Baseline\\ \tiny\citep{wang2019learning}}} & \multirow{2}{*}{\shortstack{PAR\\ \tiny\citep{wang2019learning}}}   & \multirow{2}{*}{Baseline} &
% 		\multirow{2}{*}{\shortstack{$\text{RC}_{\text{img}1\mhyphen7,}$ \\\tiny $p$=0,$\lambda$=10}} &
		\multirow{2}{*}{\shortstack{$\text{RC}_{\text{img}1\mhyphen7,}$ \\\tiny $p$=0.5,$\lambda$=10}}
		& \multirow{2}{*}{\shortstack{$\text{RC}_{\text{mix}1\mhyphen7,}$ \\\tiny $\lambda$=10 }} & 
		
		\multirow{2}{*}{{\shortstack{SIN\\\tiny\citep{geirhos2018imagenettrained}}}} \\
		\\
		\toprule
            Top1 & 12.04  & 13.06 & 10.28  &
            % & 18.63 
		18.09 & 16.91 & 17.62 \\
		Top5  & 25.60 & 26.27 & 21.60 
% 		& 36.409
		& 35.40 & 33.99 & 36.22\\ 
% 		\midrule
% 		Shape Bias & & & 25.36 & 48.24 & 54.85 & 61.66 &
		\bottomrule
	\end{tabular}
% 	\vspace{-4mm}
\end{table}

ImageNet-Sketch~\citep{wang2019learning} is an out-of-domain test set for models trained on ImageNet. We trained AlexNet from scratch with $\text{RC}_{\text{img}1\mhyphen7, p=0.5, \lambda=10}$ and $\text{RC}_{\text{mix}1\mhyphen7,\lambda=10}$. We evaluate their performance on ImageNet-Sketch. We use the AlexNet model trained without {\RandConv} as our baseline. Tab.~\ref{table:imagenet} compares \texttt{PAR} and its baseline model and {AlexNet trained with Stylized ImageNet (SIN) \citep{geirhos2018imagenettrained} on ImageNet-Sketch.}
Although \texttt{PAR} uses a stronger baseline, {\RandConv} achieves significant improvements over our baseline and outperforms \texttt{PAR} by a large margin. Our methods achieve more than a 7\% accuracy improvement over the baseline and surpass PAR by 5\%. 
 {SIN as an image stylization approach that can modify image texture in a hierarchical and realistic way. However, albeit its complexity, it still performs on par with RandConv. Note that image stylization techniques require additional data and heavy precomputation. Further, the images for the style source also need to be chosen. In contrast, RandConv is much easier to use: it can be applied to any dataset via a simple convolution layer. We also measure the shape-bias metric proposed by
\cite{geirhos2018imagenettrained} for \texttt{RandConv} trained AlexNet. $\text{RC}_{\text{img}1\mhyphen7, p=0.5, \lambda=10}$ and $\text{RC}_{\text{mix}1\mhyphen7,\lambda=10}$ improve the baseline from $25.36\%$ to $48.24\%$ and $54.85\%$ respectively.}

% 	even though on the ImageNet validation set its accuracy is much lower than the baseline model.\mnl{I keep on getting confused by what the baslines here are. There seem to be many different ones, are there?} 

\subsection{Revisiting PACS with more Robust Pretrained Representations}
\label{section:revisit_pacs}
A common practice for many computer vision tasks (including the PACS benchmark) is transfer learning, i.e.\ finetuning a backbone model pretrained on ImageNet. Recently, how the accuracy on ImageNet \citep{kornblith2019better} and adversial robustness \citep{salman2020adversarially} of the pretrained model affect transfer learning has been studied in the context of domain generalization. Instead, we study how out-of-domain generalizability transfers from pretraining to downstream tasks and shed light on how to better use pretrained models.

\textbf{Impact of ImageNet Pretraining} A model trained on ImageNet may be biased towards textures~\citep{geirhos2018imagenettrained}. 
% \todo{I thought PACS was a dataset. How is it related to ImageNet pretraining?}
Finetuning ImageNet pretrained models on PACS may inherit this texture bias, thereby benefitting generalization on the Photo domain (which is similar to ImageNet), but hurting performance on the Sketch domain. Therefore, as shown in Sec.~\ref{section:PACS}, using {\RandConv} to correct this texture bias improves results on Sketch, but degrades them on the Photo domain. % when using strong regularization as the consistent loss in
Since pretraining has such a strong impact on transfer performance to new tasks, we ask: \emph{"Can the generalizability of a pretrained model transfer to downstream tasks? I.e., does a pretrained model with better generalizability improve performance on unseen domains on new tasks?"} To answer this, we revisit the PACS tasks based on  ImageNet-pretrained weights where our two {\RandConv} variants of Sec.~\ref{section:imagenet-sketch} are used during ImageNet training. We study if this results in performance changes for the Deep-All baseline and for finetuning with {\RandConv}.
%To answer this, we revisit the PACS tasks for ImageNet-pretrained weights using our two {\RandConv} variants of Sec.~\ref{section:imagenet-sketch} for initialization. We verify if this changes the performance for the Deep-All baseline and for finetuning with {\RandConv}.

\begin{table}[t]
% 	\vspace{-4mm}
	\small
	\caption{Generalization results on PACS with {\RandConv} and SIN pretrained AlexNet.  ImageNet column shows how the pretrained model is trained on ImageNet (baseline represents training the ImageNet model using only the classification loss); PACS column indicates the methods used for finetuning on PACS. \textbf{Best} and \underline{second best} accuracy for each target domain are highlighted in bold and underlined.}
	\label{table:transfer_PACS}
	\centering
	\begin{tabular}{c|c|ccccc}
		\toprule
		PACS                & ImageNet & Photo & Art & Cartoon & Sketch          & Avg                  \\
		\toprule
		\multirow{4}{*}{Deep-All} & 
		Baseline     & \textbf{86.77\tiny(0.42)}          & 60.11\tiny(1.33)          & 64.12\tiny(0.32)          & 55.28\tiny(4.71)          & 66.57\tiny(1.36)          \\
		
		&	$\text{RC}_{\text{img}1\mhyphen7,p=0.5,\lambda=10}$  & 84.48\tiny(0.52)          & 62.61\tiny(1.23)          & 66.13\tiny(0.80)          & 69.24\tiny(0.80)          & 70.61\tiny(0.53)          \\
		
		&	$\text{RC}_{\text{mix}1\mhyphen7, \lambda=10}$    & 85.59\tiny(0.40)          & 63.30\tiny(0.99)          & 63.83\tiny(0.85)          & 68.29\tiny(1.27)          & 70.25\tiny(0.45)          \\
		& {SIN} & 85.33\tiny(0.66)	& \textbf{65.85\tiny(0.87)}&	65.39\tiny(0.62)	&65.75\tiny(0.59)&	70.58\tiny(0.21) \\
		
		\midrule
	
		\multirow{3}{*}{\shortstack{$\text{RC}_{\text{img}1\mhyphen7,}$ \\ \tiny $p$=0.5,$\lambda$=10}}   & Baseline     & 81.15\tiny(0.76)          & 59.56\tiny(0.79)          & 62.42\tiny(0.59)          & 71.74\tiny(0.43)          & 68.72\tiny(0.58)          \\

		&$\text{RC}_{\text{img}1\mhyphen7,p=0.5,\lambda=10}$   & 84.36\tiny(0.36)          & { 63.73\tiny(0.91)}      & \textbf{68.07\tiny(0.55)} & {\ul75.41\tiny(0.57)}          & {\ul72.89\tiny(0.33)}          \\
		&$\text{RC}_{\text{mix}1\mhyphen7, \lambda=10}$       & 84.63\tiny(0.97)          & 63.41\tiny(1.22)          & 66.36\tiny(0.43)          & 74.59\tiny(0.84)          & 72.25\tiny(0.54)          \\
		\midrule
		\multirow{3}{*}{\shortstack{$\text{RC}_{\text{mix}1\mhyphen7}$ \\\tiny $\lambda$=10 }}      & Baseline     & 81.78\tiny(1.11)          & 61.14\tiny(0.51)          & 63.57\tiny(0.29)          & 71.97\tiny(0.38)          & 69.62\tiny(0.24)          \\
		&$\text{RC}_{\tiny\text{img}1\mhyphen7,p=0.5,\lambda=10}$  & {85.16\tiny(1.03)}    & 63.17\tiny(0.38)          & {\ul67.68\tiny(0.60)}        & \textbf{76.11\tiny(0.43)}    & \textbf{73.03\tiny(0.46)}    \\
		&$\text{RC}_{\text{mix}1\mhyphen7, \lambda=10}$   & {\ul86.17\tiny(0.56)} & {\ul65.33\tiny(1.05)} & 65.52\tiny(1.13)          & 73.21\tiny(1.03)          & 72.56\tiny(0.50)         \\
		\bottomrule
	\end{tabular}
	\vspace{-5mm}
\end{table}

\textbf{Better Performance via RandConv pretrained model}  We start by testing the Deep-All baselines using the two {\RandConv}-trained ImageNet models of Sec.~\ref{section:imagenet-sketch} as initialization. Tab.~\ref{table:transfer_PACS} shows significant improvements on Sketch. Results are comparable to finetuning with {\RandConv} on a normal pretrained model. Art is also consistently improved. Performance drops slightly on Photo as expected, since we reduced the texture bias in the pretrained model, which is helpful for the Photo domain. {A similar performance improvement is observed when using the SIN-trained AlexNet as initialization.} 
Using {\RandConv} for \emph{both} ImageNet training and PACS finetuning, we achieve 76.11\% accuracy on Sketch. As far as we know, this is the best performance using an AlexNet baseline. This approach even outperforms Jigen~\citep{carlucci2019jigen} (71.35\%) with a stronger ResNet18 baseline model. Cartoon and Art are also improved. The best average domain generalization accuracy is 73.03\%, with a more than 6\% improvement over our initial Deep-All baseline. 

This experiment confirms that generalizability may transfer: removing texture bias may not only make a pretrained model more generalizable, but it may help generalization on downstream tasks. For similar target and pretraining domains like Photo and ImageNet, where learning texture bias may actually be beneficial, performance may degrade slightly.

\vspace{-1mm}
\section{Conclusion and Discussion}
Randomized convolution ({\RandConv}) is a simple but powerful data augmentation technique for randomizing local image texture. {\RandConv} helps focus visual representations on global shape information rather than local texture. We theoretically justified the approximate shape-preserving property of {\RandConv} and developed {\RandConv} techniques using multi-scale and mixing designs. We also make use of a consistency loss to encourage texture invariance. 
% The mixing variant of {\RandConv} achieve the trade-off between the performance on domain similar to the training data and generalizability to significantly different domains without the need to adjust fraction of using original data.
% \mnl{Have you demonstrated texture invariance? Seems a bit speculative}
{\RandConv} outperforms state-of-the-art approaches on the digit recognition benchmark and on the sketch domain of PACS and on ImageNet-Sketch by a large margin. By finetuning a model pretrained with {\RandConv} on PACS, we showed that the generalizability of a pretrained model may transfer to and benefit a new downstream task. This resulted in a new state-of-art performance on PACS in the Sketch domain. 

{{\RandConv} can help computer vision tasks when a shape-biased model is helpful e.g. for object detection. {\RandConv} can also provide a shape-biased pretrained model to improve performance on downstream tasks when generalizing to unseen domains.}
However, local texture features can be useful for many computer vision tasks, especially for fixed-domain fine-grained visual recognition. In such cases, visual representations that are invariant to local texture 
may hurt in-domain performance. Therefore, important future work includes learning representations that disentangle shape and texture features and building models to use such representations in an explainable way. 

{Adversarial robustness of deep neural networks has received significant recent attention. Interestingly, \cite{zhang2019interpreting} find that adversarially-trained models are more shape biased; \cite{shi2020informative} show that their method for increasing shape bias also helps adversarial robustness, especially when combined with adversarial training. Therefore, exploring how {\RandConv} affects the adversarial robustness of models could be interesting future work. Moreover, recent biologically inspired models for improving adversarial robustness \citep{dapello2020simulating} use Gabor filters with fixed random configurations followed by a stochastic layer to add Gaussian noise to the network input, which may explain the importance of randomness in {\RandConv}. Exploring connections between {\RandConv} and biological mechanisms in the human visual system would be interesting future work. }

\textbf{Acknowledgments} We thank Zhiding Yu for discussions on initial ideas and the experimental setup. We also thank Nathan Cahill for advice on proving the properties of random convolutions.
\newpage

% \subsubsection*{Author Contributions}
% If you'd like to, you may include  a section for author contributions as is done
% in many journals. This is optional and at the discretion of the authors.

% \subsubsection*{Acknowledgments}
% Use unnumbered third level headings for the acknowledgments. All
% acknowledgments, including those to funding agencies, go at the end of the paper.

\bibliography{mybib}
\bibliographystyle{iclr2021_conference}

\newpage
%%%%%%%%%%%%%Appendix%%%%%%%%%%%%%%
\appendix
This supplementary material provides additional details. Specifically, in Sec.~\ref{sec:shapes} and \ref{theorem_proof}, we discuss definitions of shapes and textures in images and justify why random convolution preserves global shapes and disrupts local texture formally by proving Theorem~\ref{theorem1}. This theorem shows that random linear projections are approximately distance preserving. We also discuss our simulation-based bound based on 80\% distance rescaling on real image data. Sec.~\ref{exp_detail} provides more experimental details for the different datasets. Sec.~\ref{resnet_exp} shows experimental results with a stronger backbone architecture and on a new benchmark ImageNet-R~\citep{hendrycks2020many}.  Sec.~\ref{results} provides more detailed results regarding hyperparameter selection and ablation studies. Lastly, Sec.~\ref{examples} shows example visualizations of {\RandConv} outputs and for its mixing variant.

\section{Shapes and Texture in Images}
\label{sec:shapes}
\begin{figure}[t]
    \centering
	\newcommand\cwidth{0.3\textwidth}
	\begin{tabular}{ccc}
		\includegraphics[width=\cwidth]{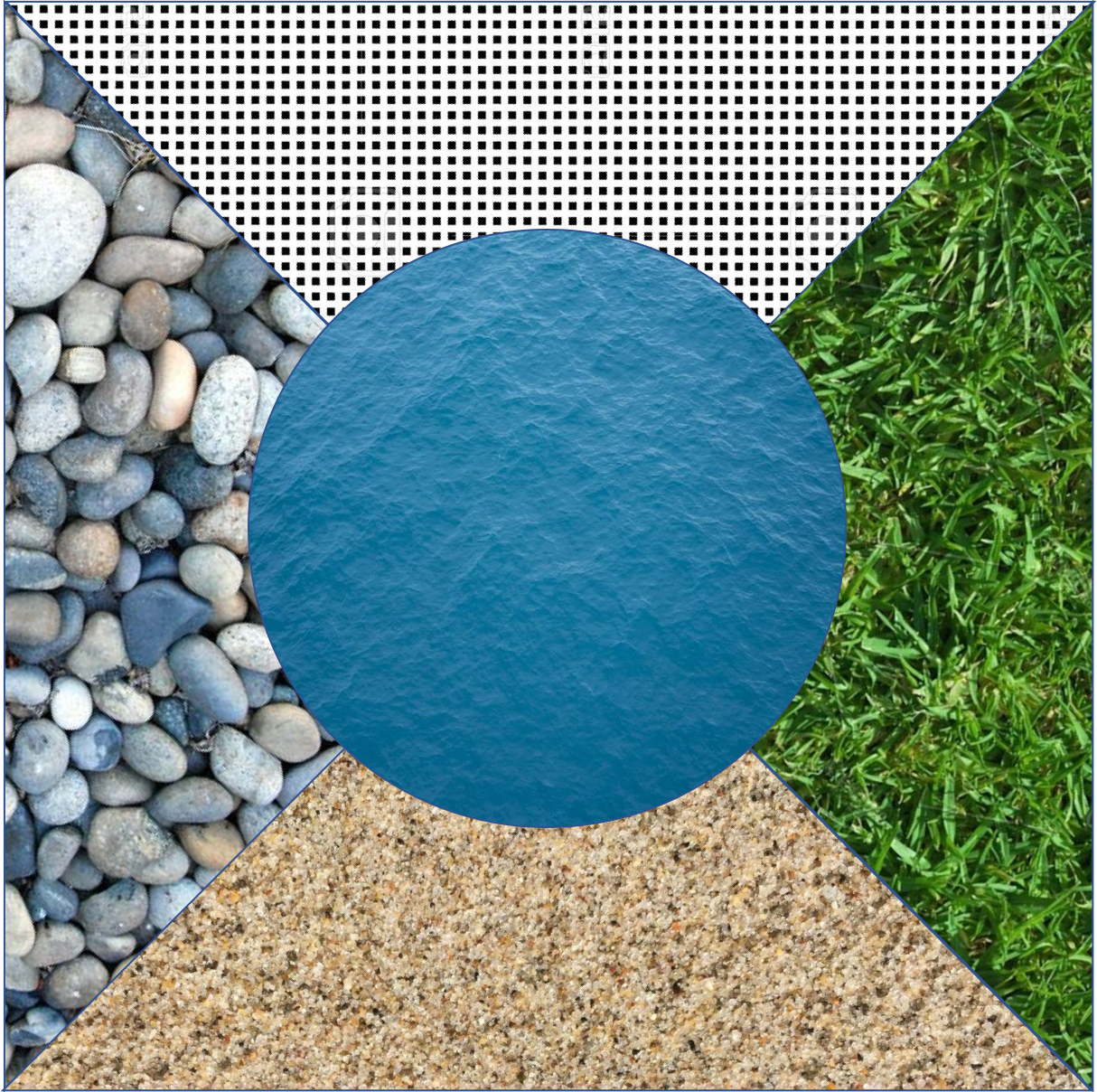} & \includegraphics[width=\cwidth]{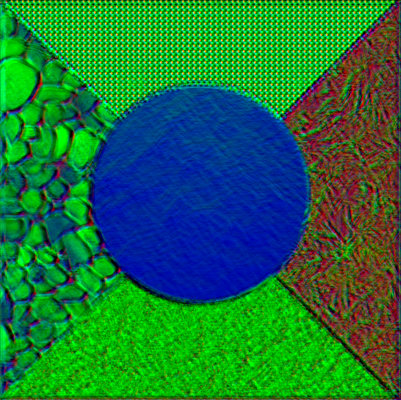}  & \includegraphics[width=\cwidth]{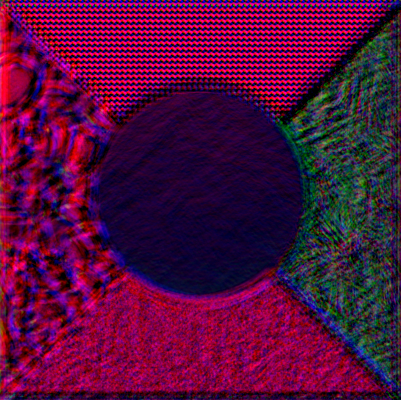} \\
	\end{tabular}
	\caption{\small \textbf{Left:} An image with texture and shapes at different scales; \textbf{Middle:} The output of {\RandConv} with a small filter size which largely preserves the shapes of the stones.  \textbf{Right:} The output of {\RandConv} with a large filter size distorts the shape of the stones as well.}
	\vspace{-5mm}
	\label{fig:texture_example}
\end{figure}
As discussed in the main text, we define shapes in images that are preserved by a random convolution layer as primitive shapes: spatial clusters of pixels with similar local texture. An object in a image can be a single primitive shape alone but in most cases it is the composition of multiple primitive shapes e.g. a car includes wheels, body frames, windshields. Note that the definition of texture is not necessarily opposite to shapes, since the texture of a larger shape can includes smaller shapes. For example, in Fig.\ref{fig:texture_example}, the left occluded triangle shape has texture composed by shapes of cobble stones while cobble stones have their own texture. Random convolution can preserve those large shapes that usually define the image semantics while distorting the small shapes as local texture.   

To formally define the shape-preserving property, we assume $(x_1, y_1)$, $(x_2, y_2)$ and $(x_3, y_3)$ are three locations on a image and $(x_1, y_1)$ has closer color and local texture with $(x_2, y_2)$ than $(x_3, y_3)$. For example, $(x_1, y_1)$ and $(x_2, y_2)$ are within the same shape while $(x_3, y_3)$ is located at a neighboring shape. Then we have $\|\mathbf{p}(x_1, y_1) - \mathbf{p}(x_2, y_2)\| < \|\mathbf{p}(x_1, y_1) - \mathbf{p}(x_3, y_3)\|$, where $\mathbf{p}(x_i, y_i)$ is the image patch at location $(x_i, y_i)$. A transformation $f$ is \emph{shape-preserving} if it \emph{maintains} such relative distance relations for most location triplets, i.e.
\begin{align}
\label{eq:dist_preserve}
\|f(\mathbf{p}(x_i, y_i)) - f(\mathbf{p}(x_j, y_j))\| / \|\mathbf{p}(x_i, y_i) - \mathbf{p}(x_j, y_j)\| \approx r \end{align}
for any two spatial location $(x_i, y_i)$ and $(x_j, y_j)$;  $r\geq 0$ is a constant. 

\section{Random Convolution is Shape-preserving as Random Linear Projection is Distance Preserving}
\label{theorem_proof}

We can express a convolution layer as a local linear projection:
\begin{align}
\label{eq:local_projection}
\mathbf{g}(x, y) & = \mathbf{U}\mathbf{p}(x,y)\,,
\end{align}
where $\mathbf{p}(x,y)\in\mathbb{R}^{d}$ ($d = h \times w \times C_{in}$) is the vectorized image patch centerized at location $(x,y)$, $\mathbf{g}(x,y)\in\mathbb{R}^{C_{out}}$ is the output feature at location $(x,y)$, and $\mathbf{U}\in\mathbb{R}^{C_{out}\times d }$ is the matrix expressing the convolution layer filters $\mathbf{\Theta}$. I.e., for each sliding window centered at $(x, y)$, a convolution layer applies a linear transform $f: \mathbb{R}^{d}\rightarrow\mathbb{R}^{C_{out}} $ projecting the $d$ dimensional local image patch $\mathbf{p}(x,y)$ to its $C_{out}$ dimensional feature $\mathbf{g}(x, y)$. When $\mathbf{\Theta}$ is independently randomly sampled, e.g. from a Gaussian distribution, the convolution layer preserves global shapes since that a random linear projection is \emph{approximately} distance-preserving by bounding the range of $r$ in Eq.~\ref{eq:dist_preserve} in Theorem~\ref{theorem1}.

\begin{theorem}
	\label{theorem1}
	Suppose we have N data points $\mathbf{z}_1,\cdots, \mathbf{z}_N \in \mathbb{R}^d$. Let $f(\mathbf{z}) = \mathbf{U} \mathbf{z}$ be a random linear projection $f: \mathbb{R}^{d}\rightarrow\mathbb{R}^{m} $ such that $\mathbf{U} \in \mathbb{R}^{m \times d}$ and $\mathbf{U}_{i,j} \sim N(0,\sigma^2)$. Then we have:
	\begin{equation}
	\begin{aligned}
	P\Big(\sup_{i\neq j; i,j  \in [N]} \Big\{r_{i,j} :=\frac{\norms{f(\mathbf{z}_i) - f(\mathbf{z}_j)}}{\norms{\mathbf{z}_i - \mathbf{z}_j}} \Big\} > \delta_1 \Big) \leq \epsilon,  \\\vspace{5ex}
	P\Big(\inf_{i\neq j; i,j  \in [N]} \Big\{r_{i,j} :=\frac{\norms{f(\mathbf{z}_i) - f(\mathbf{z}_j)}}{\norms{\mathbf{z}_i - \mathbf{z}_j}} \Big\} < \delta_2 \Big) \leq \epsilon,  
	\end{aligned}
	\end{equation}
	where $\delta_1 := \sigma\sqrt{\chi^2_{\frac{2\epsilon}{N(N-1)}}(m)}$ and $\delta_2 := \sigma\sqrt{\chi^2_{1 - \frac{2\epsilon}{N(N-1)}}(m)}$. Here, $\chi^2_{\alpha}(m)$ denotes the $\alpha$-upper quantile of the $\chi^2$ distribution with $m$ degrees of freedom.
	
\end{theorem}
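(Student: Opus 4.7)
The plan is to reduce the statement to standard chi-squared tail bounds for a single pair of points, and then close by a union bound over the $\binom{N}{2}$ unordered pairs.

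First, I would fix a pair $i\neq j$ and set $\mathbf{v} := \mathbf{z}_i-\mathbf{z}_j$. Since $f$ is linear, $f(\mathbf{z}_i)-f(\mathbf{z}_j)=\mathbf{U}\mathbf{v}$. Each coordinate $(\mathbf{U}\mathbf{v})_k=\sum_l \mathbf{U}_{k,l}v_l$ is a linear combination of i.i.d.\ $N(0,\sigma^2)$ variables, hence $(\mathbf{U}\mathbf{v})_k\sim N(0,\sigma^2\norms{\mathbf{v}}^2)$, and different coordinates are independent because the rows of $\mathbf{U}$ are independent. Consequently
\[
T_{ij}\;:=\;\frac{\norms{\mathbf{U}\mathbf{v}}^2}{\sigma^2\norms{\mathbf{v}}^2}\;=\;\sum_{k=1}^m\left(\frac{(\mathbf{U}\mathbf{v})_k}{\sigma\norms{\mathbf{v}}}\right)^{\!2}\;\sim\;\chi^2(m).
\]

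Next, I would translate the sup/inf events into per-pair chi-squared tail events. By definition of the upper quantile, $P(\chi^2(m)>\chi^2_{\alpha}(m))=\alpha$ and $P(\chi^2(m)<\chi^2_{1-\alpha}(m))=\alpha$. Setting $\alpha:=\tfrac{2\epsilon}{N(N-1)}$, the choice $\delta_1=\sigma\sqrt{\chi^2_\alpha(m)}$ yields
\[
P\!\left(\tfrac{\norms{\mathbf{U}\mathbf{v}}}{\norms{\mathbf{v}}}>\delta_1\right)=P\!\left(T_{ij}>\chi^2_\alpha(m)\right)=\alpha,
\]
and symmetrically $\delta_2=\sigma\sqrt{\chi^2_{1-\alpha}(m)}$ yields $P(\norms{\mathbf{U}\mathbf{v}}/\norms{\mathbf{v}}<\delta_2)=\alpha$.

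Finally, I would apply the union bound. Because $\norms{f(\mathbf{z}_i)-f(\mathbf{z}_j)}/\norms{\mathbf{z}_i-\mathbf{z}_j}$ is symmetric in $i,j$, the suprema and infima range over exactly $\binom{N}{2}=N(N-1)/2$ distinct pairs; summing the per-pair probability $\alpha$ over these pairs gives $\binom{N}{2}\cdot\tfrac{2\epsilon}{N(N-1)}=\epsilon$, which is the claimed bound for both inequalities.

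There is no real conceptual obstacle here; the only things to be careful about are the quantile convention ($\chi^2_\alpha(m)$ is an \emph{upper} quantile, so the small-ratio event naturally involves $\chi^2_{1-\alpha}(m)$) and counting pairs as unordered to match the factor $2/(N(N-1))$ inside the quantile indices.
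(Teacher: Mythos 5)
Your proposal is correct and follows essentially the same route as the paper's proof: reduce each pair to a $\chi^2(m)$ statistic via the Gaussianity of $\mathbf{U}\mathbf{v}/(\sigma\norms{\mathbf{v}})$, invoke the upper-quantile definition for the per-pair tail, and union-bound over the $N(N-1)/2$ unordered pairs. If anything, you are slightly more explicit than the paper in noting that the pairs must be counted as unordered for the factor $\frac{2\epsilon}{N(N-1)}$ to sum to exactly $\epsilon$.
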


Thm.~\ref{theorem1} tells us that for any data pair $(\mathbf{z}_i, \mathbf{z}_j)$ in a set of $N$ points, the distance rescaling ratio $r_{i,j}$ after a random linear projection is bounded by $\delta_1$ and $\delta_2$ with probability  $1 -\epsilon$. A Smaller $N$ and a larger output dimension $m$ give better bounds. E.g., when $m=3$, $N=1,000$, $\sigma=1$ and $\epsilon=0.1$, $\delta_1 = 5.8$ and $\delta_2 = 0.01$. Thm.~\ref{theorem1} gives a theoretical bound for \emph{all} the $N(N-1)/2$ pairs. However, in practice, preserving distances for a majority of $N(N-1)/2$ pairs is sufficient. To empirically verify this, we test
% \todo{I don't understand what you mean by range of 80\% of all pairs.}
the range of central $80\%$ of $\{r_{i,j}\}$ on real image data. Using the same $(m,N,\sigma,\epsilon)$, $80\%$ of the pairs lie in 
% \todo{I don't understand this bound. Is $[0.56,2.87]$ purly empirical. How does it tie back to the theory and ``the same setting''?}
$[0.56,2.87]$, which is significantly better than the strict bound: $[0.01,5.8]$. A proof of the theorem and simulation details are given in the following.

\begin{proof} 
	Let $\mathbf{U}_{k}$ represent to the $k$-th row of $\mathbf{U}$. It is easy to check that $\mathbf{v}_k := \iprods{\mathbf{U}_k, \mathbf{z}_i - \mathbf{z}_j}/\norms{\mathbf{z}_i - \mathbf{z}_j} \sim N(0, \sigma^2)$. Therefore,
	
	$$
	\frac{\norms{f(\mathbf{z}_i) - f(\mathbf{z}_j)}^2}{\sigma^2\norms{\mathbf{z}_i - \mathbf{z}_j}^2} = \frac{1}{\sigma^2 } \frac{(\mathbf{z}_i-\mathbf{z}_j)^{\top}\mathbf{U}^{\top}\mathbf{U}(\mathbf{z}_i-\mathbf{z}_j)}{\norms{\mathbf{z}_i - \mathbf{z}_j}^2} = \sum_{k = 1}^{ m} \frac{\mathbf{v}_k^2}{\sigma^2} \sim \chi^2(m).
	$$
	Therefore, for $0 < \epsilon < 1$, we have
	$$
	P\Big(\frac{\norms{f(\mathbf{z}_i) - f(\mathbf{z}_j)}^2}{\sigma^2\norms{\mathbf{z}_i - \mathbf{z}_j}^2} > \chi^2_{\frac{2\epsilon}{N(N-1)}}(m) \Big) \leq \frac{2\epsilon}{N(N-1)}.
	$$
	From the above inequality, we have
	$$
	\begin{array}{l}
	P\Big(\sup_{i\neq j; i,j  \in [N]}\Big\{ \frac{\norms{f(\mathbf{z}_i) - f(\mathbf{z}_j)}^2}{\norms{\mathbf{z}_i - \mathbf{z}_j}^2}\Big\} > \sigma^2\chi^2_{\frac{2\epsilon}{N(N-1)}}(m)\Big) \vspace{1ex}\\
	= P\Big(\sup_{i\neq j; i,j  \in [N]}\Big\{\frac{\norms{f(\mathbf{z}_i) - f(\mathbf{z}_j)}^2}{\sigma^2\norms{\mathbf{z}_i - \mathbf{z}_j}^2}\Big\} > \chi^2_{\frac{2\epsilon}{N(N-1)}}(m)\Big) \vspace{1ex}\\
	= P\Big(\bigcup\limits_{i\neq j; i,j  \in [N]}\Big\{\frac{\norms{f(\mathbf{z}_i) - f(\mathbf{z}_j)}^2}{\sigma^2\norms{\mathbf{z}_i - \mathbf{z}_j}^2} > \chi^2_{\frac{2\epsilon}{N(N-1)}}(m) \Big\}\Big) \vspace{1ex}\\
	\leq \sum\limits_{i\neq j; i,j  \in [N]} P\Big(\frac{\norms{f(\mathbf{z}_i) - f(\mathbf{z}_j)}^2}{\sigma^2\norms{\mathbf{z}_i - \mathbf{z}_j}^2} > \chi^2_{\frac{2\epsilon}{N(N-1)}}(m) \Big) \vspace{1ex}\\
	\leq \epsilon, \vspace{1ex}
	\end{array}
	$$
	
	which is equivalent to
	$$
	P\Big(\sup_{i\neq j; i,j  \in [N]}\Big\{\frac{\norms{f(\mathbf{z}_i) - f(\mathbf{z}_j)}}{\norms{\mathbf{z}_i - \mathbf{z}_j}}\Big\} > \sigma\sqrt{\chi^2_{\frac{2\epsilon}{N(N-1)}}(m)}\Big) \leq \epsilon.
	$$
	Similarly, we have
	$$
	P\Big(\inf_{i\neq j; i,j  \in [N]} \Big\{\frac{\norms{f(\mathbf{z}_i) - f(\mathbf{z}_j)}}{\norms{\mathbf{z}_i - \mathbf{z}_j}} \Big\} < \sigma\sqrt{\chi^2_{1 - \frac{2\epsilon}{N(N-1)}}(m)} \Big) \leq \epsilon.
	$$
	
	%   In addition, by defining
	%   $$
	%   \kappa_{\max} := \sup_{i\neq j; i,j  \in [N]}\Big\{\frac{\norms{f(x_i) - f(x_j)}^2}{\norms{x_i - x_j}^2}\Big\} \text{  and  } \kappa_{\min} := \inf_{i\neq j; i,j  \in [N]} \Big\{\frac{\norms{f(x_i) - f(x_j)}^2}{\norms{x_i - x_j}^2} \Big\},
	%   $$
	%   it is easy to show that
	%   $$
	%   \sup_{i\neq j; i,j  \in [N]} \Big|\log(\frac{\norms{f(x_i) - f(x_j)}^2}{\norms{x_i - x_j}^2}) \Big| \leq \max\{\log(\kappa_{\max}), -\log(\kappa_{\min})\}
	%   $$
	%   Therefore,
	%   $$
	%   \begin{array}{l}
	%   P\Big(\sup_{i\neq j; i,j  \in [N]} \Big|\log(\frac{\norms{f(x_i) - f(x_j)}^2}{\norms{x_i - x_j}^2}) \Big| > \max\{|\log(\delta_1)|, |\log(\delta_2)|\}\Big) \\\vspace{1ex}
	%   \leq P\Big(\max\{\log(\kappa_{\max}), -\log(\kappa_{\min}))\} > \max\{|\log(\delta_1)|, |\log(\delta_2)|\}\Big) \\\vspace{1ex}
	%   = P\Big(\max\{\log(\kappa_{\max}), -\log(\kappa_{\min}))\} > \max\{\log(\delta_1), -\log(\delta_2)\}\Big) \\\vspace{1ex}
	%   \leq P\Big(\log(\kappa_{\max}) > \max\{\log(\delta_1), -\log(\delta_2)\}\Big) + P\Big(-\log(\kappa_{\min}) > \max\{\log(\delta_1), -\log(\delta_2)\}\Big) \\\vspace{1ex}
	%   \leq P\Big(\log(\kappa_{\max}) > \log(\delta_1)\Big) + P\Big(-\log(\kappa_{\min}) > -\log(\delta_2)\Big) \\\vspace{1ex}
	%   = P\Big(\kappa_{\max} > \delta_1\Big) + P\Big(\kappa_{\min} < \delta_2\Big) \leq 2\epsilon.
	%   \end{array}
	%   $$
\end{proof}

\textbf{Simulation on Real Image Data} To better understand the relative distance preservation property of random linear projections in practice, we use Algorithm~\ref{algo:simulation} to empirically obtain a bound for real image data. We choose $m=3$, $N=1,000$, $\sigma=1$ and $\epsilon=0.1$ as in computing our theoretical bounds. We use  $M=1,000$ real images from the PACS dataset for this simulation. Note that the image patch size or $d$ does not affect the bound. We use a patch size of $3\times3$ resulting in $d=27$. This simulation tell us that applying linear projections with a randomly sampled $U$ on $N$ local images patches in every image, we have a $1-\epsilon$ chance that $80\%$ of $r_{i,j}$ is in the range $[\delta_{10\%}, \delta_{90\%}]$. 		

\begin{algorithm}[h]
	\small
	\caption{Simulate the range of central 80\% of ${r_{i,j}}$ on real image data}
	\label{algo:simulation}
	\begin{algorithmic}[1]
		\State \textbf{Input}: $M$ images $\{I_i\}_{i=1}^{M}$, number of data points $N$, projection output dimension $m$, standard deviation $\sigma$ of normal distribution, confidence level $\epsilon$.
		%		\State \textbf{Output}: $\delta_{10\%}$ and $\delta_{90\%}$,  $\epsilon$ confident bound for central 80\% $r_{i,j}$ at $N$ data points from a image 
		
		\For {$m = 1 \to M$}
		\State Sample images patches in $I_m$ at 1,000 locations and vectorize them as $\{\mathbf{z}_l^m\}_{l=1}^{N}$
		\State Sample a projection matrix $\mathbf{U} \in \mathbb{R}^{m \times d}$ and $\mathbf{U}_{i,j} \sim N(0,\sigma^2)$
		\For {$i = 1 \to N$}
		\For {$j = i+1\to N$}
		\State Compute $r_{i,j}^m = \frac{\norms{f(\mathbf{z}_i^m) - f(\mathbf{z}_j^m)}}{\norms{\mathbf{z}_i^m - \mathbf{z}_j^m}}$, where $f(\mathbf{z}) = \mathbf{U} \mathbf{z}$
		\EndFor	
		\EndFor
		\State $q^m_{10\%}$ = $10\%$ quantile of $r_{i,j}^m$ for $I_m$ 
		\State $q^m_{90\%}$	= $90\%$ quantile of $r_{i,j}^m$ for $I_m$ \Comment{Get the central 80\% of $r_{i,j}$ in each image}
		\EndFor	
		\State $\delta_{10\%}$ = $\epsilon$ quantile of all $q^m_{10\%}$ 
		\State $\delta_{90\%}$ = $(1-\epsilon)$ quantile of all $q^m_{90\%}$
		\Comment{Get the $\epsilon$ confident bound for $q^m_{10\%}$ and $q^m_{90\%}$}
		\State \textbf{return} $\delta_{10\%}$, $\delta_{90\%}$		 
	\end{algorithmic}
	%	\vspace{-0.4cm}%
\end{algorithm}

\section{Experimental Details}
\label{exp_detail}

\textbf{Digits Recognition} The network for our digits recognition experiments is composed of two \emph{Conv5$\times$5-ReLU-MaxPool2$\times$2} blocks with 64/128 output channels and three fully connected layer with 1024/1024/10 output channels. We train the network with batch size 32 for 10,000 iterations. During training, the model is validated every 250 iterations and saved with the best validation score for testing. We apply the \texttt{Adam} optimizer with an initial learning rate of 0.0001.   

\textbf{PACS} We use the official data splits for training/validation/testing; no extra data augmentation is applied. We use the official \texttt{PyTorch} implementation and the pretrained weights of AlexNet for our PACS experiments. AlextNet is finetuned for 50,000 iterations with a batch size 128. Samples are randomly selected from the training data mixed between the three domains. We use the validation data of  source domains only at every 100 iterations. We use the \texttt{SGD} optimizer for training with an initial learning rate of 0.001, Nesterov momentum, and weight decay set to 0.0005. We let the learning rate decay by a factor of 0.1 after finishing 80\% of the iterations.

\textbf{ImageNet} Following the \texttt{PyTorch} example \footnote{https://github.com/pytorch/examples/tree/master/imagenet} on training ImageNet models, we set the batch size to 256 and train AlexNet from scratch for 90 epochs. We apply the \texttt{SGD} optimizer with an initial learning rate of 0.01, momentum 0.9, and weight decay 0.0001. We reduce the learning rate via a factor of 0.1 every 30 epochs.

\section{More Experiments with ResNet-18}
\label{resnet_exp}
In this section, we demonstrate that {\RandConv} also works on other stronger backbone architectures, e.g. for a Residual Network~\citet{he2016deep}. Specifically, we run the PACS and ImageNet experiments with ResNet-18 as the baseline and {\RandConv}. As Table~\ref{table:imagenet_resnet} shows,  {\RandConv} improves the baseline using ResNet18 on ImageNet-sketch by 10.5\% accuracy. When using a {\RandConv} pretrained ResNet-18 on PACS, the performance of finetuning with DeepAll and {\RandConv} are both improved shown in Table~\ref{table:transfer_PACS_resnet}. The best average domain generalization accuracy is 84.09\%, with a more than 8\% improvement over our initial Deep-All baseline. A model pretrained with $\text{RC}_{\text{mix}{1\mhyphen7, \lambda=10}}$ generally performs better than when pretrained with $\text{RC}_{\text{img}1\mhyphen7,p=0.5,\lambda=10}$. We also provide the ResNet-18 performance of JiGen~\citep{carlucci2019jigen} on PACS as reference. Note that JiGen uses extra data augmentation and a different data split than our approach and it only improves over its own baseline by 1.5\%. In addition, we test {\RandConv} trained ResNet-18 on ImageNet-R \citep{hendrycks2020many}, a domain generalization benchmark that contains images of artistic renditions of 200 object classes from the original ImageNet dataset. As Table \ref{table:imagenetR_resnet} shows, {\RandConv} also improve the generalization performance on ImageNet-R and reduce the gap between the in-domain (ImageNet-200) and out-of-domain (ImageNet-R) performance. 

\begin{table}[htp]
	\small
	\caption{Accuracy of ImageNet-trained ResNet-18 on ImageNet-Sketch data.}
	\label{table:imagenet_resnet}
	\centering
	\begin{tabular}{c|cccc}
		\toprule
		      & Baseline &
		$\text{RC}_{\text{img}{1\mhyphen7}}$\tiny, $p$=0.5, $\lambda$=10   & $\text{RC}_{\text{mix}{1\mhyphen7}}$\tiny, $\lambda$=10   \\
		\toprule
		    Top1  & 20.23        & 
		28.79 &  30.70 \\
		 Top5 & 37.26 
		& 49.02 & 51.80 \\
		\bottomrule
	\end{tabular}
\end{table}

\begin{table}[htp]
	\small
	\caption{Top 1 Accuracy of ImageNet-trained ResNet-18 on ImageNet-R data. ImageNet-200 are the original ImageNet data with the same 200 classes as ImageNet-R. }
	\label{table:imagenetR_resnet}
	\centering
	\begin{tabular}{c|cccc}
		\toprule
		      & Baseline &
		$\text{RC}_{\text{img}{1\mhyphen7}}$\tiny, $p$=0.5, $\lambda$=10   & $\text{RC}_{\text{mix}{1\mhyphen7}}$\tiny, $\lambda$=10   \\
		\toprule
		    ImageNet-200 (\%)    &  88.15         & 
		83.72 &  72.7 \\
		 ImageNet-R (\%) &  33.06   
		& 37.38 & 35.75 \\
		Gap & 55.09 & 46.34 & 36.95\\
		\bottomrule
	\end{tabular}
\end{table}

\begin{table}[htp]
	\small
	\vspace{-2mm}
	\caption{Generalization results on PACS with {\RandConv} pretrained model using ResNet-18. ImageNet column shows how the pretrained model is trained on ImageNet (baseline represents training using only the classification loss); PACS column indicates the methods used for finetuning on PACS.  \textbf{Best} and \underline{second best} accuracy for each target domain are highlighted in bold and underlined. The performance of JiGen~\citep{carlucci2019jigen} and its baseline using ResNet-18 is also given. }
	\label{table:transfer_PACS_resnet}
	\centering
	\begin{tabular}{c|c|ccccc}
		\toprule
		PACS                & ImageNet & Photo & Art & Cartoon & Sketch          & Avg                  \\
		\toprule
		\multirow{3}{*}{Deep-All} & Baseline     & \textbf{95.45\tiny(0.43)} &        74.96\tiny(0.99) &        71.48\tiny(1.22) &        62.09\tiny(1.12) &        76.00\tiny(0.37)          \\
		&	$\text{RC}_{\text{img}1\mhyphen7,p=0.5,\lambda=10}$  & 94.65\tiny(0.16) &        73.85\tiny(0.97) &        74.78\tiny(0.58) &        73.51\tiny(1.16) &        79.20\tiny(0.40)   \\
		&	$\text{RC}_{\text{mix}1\mhyphen7, \lambda=10}$    & 94.10\tiny(0.43) &        76.72\tiny(1.43) &        73.41\tiny(1.29) &        77.60\tiny(0.55) &        80.46\tiny(0.74)         \\
		\midrule
	
		\multirow{3}{*}{\shortstack{$\text{RC}_{\text{img}1\mhyphen7,}$ \\ \tiny $p$=0.5,$\lambda$=10}}   & Baseline     & 92.37\tiny(0.54) &        76.50\tiny(0.55) &        71.33\tiny(0.29) &        79.65\tiny(1.32) &        79.96\tiny(0.53)              \\

		&$\text{RC}_{\text{img}1\mhyphen7,p=0.5,\lambda=10}$   & 94.43\tiny(0.22) &        79.80\tiny(1.03) &        73.40\tiny(0.37) &        81.51\tiny(0.85) &        82.28\tiny(0.38)         \\
		&$\text{RC}_{\text{mix}1\mhyphen7, \lambda=10}$       & 94.57\tiny(0.45) &        \textbf{81.32\tiny(1.00)} &        \textbf{76.28\tiny(0.82)} &        \textbf{84.18\tiny(0.94)} &        \textbf{84.09\tiny(0.61)}          \\
		\midrule
		\multirow{3}{*}{\shortstack{$\text{RC}_{\text{mix}1\mhyphen7}$ \\\tiny $\lambda$=10 }}      & Baseline     & 93.57\tiny(0.40) &        77.73\tiny(0.91) &        71.24\tiny(0.91) &        75.53\tiny(2.17) &        79.52\tiny(0.61)          \\
		&$\text{RC}_{\tiny\text{img}1\mhyphen7,p=0.5,\lambda=10}$  & {\ul95.23\tiny(0.30)} &        80.56\tiny(0.82) &        74.18\tiny(0.53) &        80.70\tiny(1.43) &        82.67\tiny(0.46)    \\
		&$\text{RC}_{\text{mix}1\mhyphen7, \lambda=10}$   & 95.01\tiny(0.32) &        {\ul81.09\tiny(1.24)} &        {\ul76.04\tiny(0.92)} &        {\ul83.02\tiny(0.93)} &        {\ul83.79\tiny(0.60)}         \\
		\midrule
		Deep-All & \multirow{2}{*}{Baseline} & 95.73 & 77.85 & 74.86 & 67.74  & 79.05 \\
		JiGen & &  96.03 & 79.42 & 75.25 & 71.35 & 80.51 \\
		\bottomrule
	\end{tabular}
	\vspace{-2mm}
\end{table}

\section{Hyperparameter Selections and Ablation Studies on Digits Recognition Benchmarks}

We provide detailed experimental results for the digits recognition datasets.  Table~\ref{digits-p} shows results for different hyperameters $p$ for $\text{RC}_{\text{img}{1}}$. Table~\ref{digits-scale} shows results for an ablation study on the multi-scale design for $\text{RC}_{\text{mix}}$ and $\text{RC}_{\text{img}, p=0.5}$. Table~\ref{digits-consistency} shows results for studying the consistency loss weight $\lambda$ for $\text{RC}_{\text{mix}1\mhyphen7}$ and $\text{RC}_{\text{img}1\mhyphen7, p=0.5}$. Tables~\ref{digits-p},~\ref{digits-scale},~and~\ref{digits-consistency} correspond to Fig. 2 (a)(b)(c) in the main text respectively.
\label{results}
\begin{table}[htp]
	\small
	\centering
	\caption{Ablation study of hyperparameter $p$ for $\text{RC}_{\text{img}{1}}$ on digits recognition benchmarks. DG-Avg is the average performance on MNIST-M, SVHN, SYNTH and USPS. Best results are \textbf{bold}.}
	\label{digits-p}
	\begin{tabular}{l|c|ccccc|c}
		\toprule
		& MNIST-10k    & MNIST-M    & SVHN        & USPS        & SYNTH       & DG Avg         & MNIST-C     \\
		\midrule
		Baseline    & 98.40\tiny(0.84) & 58.87\tiny(3.73) & 33.41\tiny(5.28) & 79.27\tiny(2.70) & 42.43\tiny(5.46) & 53.50\tiny(4.23) & 88.20\tiny(2.10) \\
		$\text{RC}_{\text{img}{1}}$\tiny, $p$=0.9 & 98.68\tiny(0.06) & 83.53\tiny(0.37) & 53.67\tiny(1.54) & 80.38\tiny(1.41) & 59.19\tiny(0.85) & 69.19\tiny(0.34) & \textbf{89.79\tiny(0.44)} \\
		$\text{RC}_{\text{img}{1}}$\tiny, $p$=0.7 & 98.64\tiny(0.07) & 84.17\tiny(0.61) & 54.50\tiny(1.55) & \textbf{80.85\tiny(0.91)} & 60.25\tiny(0.85) & 69.94\tiny(0.50) & 89.20\tiny(0.60) \\
		$\text{RC}_{\text{img}{1}}$\tiny, $p$=0.5 & 98.72\tiny(0.08) & 85.17\tiny(1.12) & \textbf{55.97\tiny(0.54)} & 80.31\tiny(0.85) & \textbf{61.07\tiny(0.47)} & \textbf{70.63\tiny(0.42)} & 88.66\tiny(0.62) \\
		$\text{RC}_{\text{img}{1}}$\tiny, $p$=0.3 & 98.71\tiny(0.12) & 85.45\tiny(0.87) & 54.62\tiny(1.52) & 79.78\tiny(1.40) & 60.51\tiny(0.41) & 70.09\tiny(0.60) & {89.02\tiny(0.32)} \\
		$\text{RC}_{\text{img}{1}}$\tiny, $p$=0.1 & 98.66\tiny(0.06) & 85.57\tiny(0.79) & 54.34\tiny(1.52) & 79.21\tiny(0.44) & 60.18\tiny(0.63) & 69.83\tiny(0.38) & 88.53\tiny(0.38) \\
		$\text{RC}_{\text{img}{1}}$\tiny, $p$=0 & 98.55\tiny(0.13) & \textbf{86.27\tiny(0.42)} & 52.48\tiny(3.00) & 79.01\tiny(1.11) & 59.53\tiny(1.14) & 69.32\tiny(1.19) & 88.01\tiny(0.36) \\
		\bottomrule
	\end{tabular}
\end{table}
\begin{table}[htp]
	\small
	\centering
	\setlength{\tabcolsep}{3pt}
	\caption{Ablation study of multi-scale {\RandConv} on digits recognition benchmarks for $\text{RC}_{\text{mix}}$ and $\text{RC}_{\text{img}, p=0.5}$. Best entries for each variant are \textbf{bold}.}
	\label{digits-scale}
	\begin{tabular}{l|c|ccccc|c}
		\toprule
		& MNIST-10k    & MNIST-M    & SVHN        & USPS        & SYNTH       & DG Avg         & MNIST-C     \\
		\midrule
		$\text{RC}_{\text{mix}{1}}$       & 98.62\tiny(0.06)          & 83.98\tiny(0.98)          & 53.26\tiny(2.59)          & 80.57\tiny(1.09)          & 59.25\tiny(1.38)          & 69.26\tiny(1.35)          & 88.59\tiny(0.38)          \\
		$\text{RC}_{\text{mix}{1\mhyphen3}}$  & 98.76\tiny(0.02)          & 84.66\tiny(1.67)          & 55.89\tiny(0.83)          & 80.95\tiny(1.15)          & 60.07\tiny(1.05)          & 70.39\tiny(0.58)          & 89.80\tiny(0.94)          \\
		$\text{RC}_{\text{mix}{1\mhyphen5}}$  & 98.76\tiny(0.06)          & 84.32\tiny(0.43)          & \textbf{56.50\tiny(2.68)} & 81.85\tiny(1.05)    & 60.76\tiny(1.02)          & 70.86\tiny(0.86)          & 90.06\tiny(0.80)          \\
		$\text{RC}_{\text{mix}{1\mhyphen7}}$  & 98.82\tiny(0.06)    	& 84.91\tiny(0.68)         & 55.61\tiny(2.63)   & \textbf{82.09\tiny(1.00)} & \textbf{62.15\tiny(1.30)}   & \textbf{71.19\tiny(1.21)}        & 90.30\tiny(0.44)   \\
		$\text{RC}_{\text{mix}{1\mhyphen9}}$ &  98.81\tiny(0.12) & \textbf{85.13\tiny(0.72)}          & 54.18\tiny(3.36)          & 82.07\tiny(1.28)          & 61.85\tiny(1.41)          & 70.81\tiny(1.24)          & \textbf{90.83\tiny(0.52)}   \\
		\midrule
		$\text{RC}_{\text{img}{1}}$\tiny, $p$=0.5   & 98.66\tiny(0.05) & 85.12\tiny(0.96)          & 55.59\tiny(0.29)          & 80.65\tiny(0.71)          & 60.85\tiny(0.48)          & 70.55\tiny(0.15)          & 89.00\tiny(0.45)          \\
		$\text{RC}_{\text{img}{1\mhyphen3}}$\tiny, $p$=0.5 & 98.79\tiny(0.07) & 85.36\tiny(1.04)          & \textbf{55.60\tiny(1.09)} & 80.99\tiny(0.99)          & 61.26\tiny(0.80)          & 70.80\tiny(0.86)          & 89.84\tiny(0.70)          \\
		$\text{RC}_{\text{img}{1\mhyphen5}}$\tiny, $p$=0.5 & 98.83\tiny(0.07) & \textbf{86.33\tiny(0.47)} & 54.99\tiny(2.48)          & 80.82\tiny(1.83)          & 62.61\tiny(0.75)          & 71.19\tiny(1.25)          & 90.70\tiny(0.43)          \\
		$\text{RC}_{\text{img}{1\mhyphen7}}$\tiny, $p$=0.5 & 98.83\tiny(0.07) & 86.08\tiny(0.27)          & 54.93\tiny(1.27)          & \textbf{81.58\tiny(0.74)} & \textbf{62.78\tiny(0.86)} & \textbf{71.34\tiny(0.61)} & \textbf{91.18\tiny(0.38)} \\
		$\text{RC}_{\text{img}{1\mhyphen9}}$\tiny, $p$=0.5 & 98.80\tiny(0.12) & 85.63\tiny(0.70)          & 52.82\tiny(2.01)          & 81.48\tiny(1.22)          & 62.55\tiny(0.74)          & 70.62\tiny(0.73)          & 90.79\tiny(0.48)              \\
		%		\midrule
		% 		$\text{RC}_{\text{img}{1}}$\tiny, $p$=0  & 98.55(0.13) & 86.27(0.42)          & 52.48(3.00)          & 79.01(1.11)          & 59.53(1.14)          & 69.32(1.19)          & 88.01(0.36)          \\
		% 		$\text{RC}_{\text{img}{1\mhyphen3}}$\tiny, $p$=0  & 98.67(0.07) & \textbf{87.94(0.51)} & 54.03(1.53)          & \textbf{80.79(1.17)} & 60.71(0.86)          & 70.87(0.85)          & 89.28(0.39)          \\
		% 		$\text{RC}_{\text{img}{1\mhyphen5}}$\tiny, $p$=0  & 98.66(0.11) & 87.49(1.30)          & 52.97(2.00)          & 80.26(1.16)          & 61.85(1.22)          & 70.64(0.84)          & \textbf{89.90(0.66)} \\
		% 		$\text{RC}_{\text{img}{1\mhyphen7}}$\tiny, $p$=0 & 98.59(0.16) & 86.98(1.66)          & \textbf{54.28(2.21)} & 80.52(1.01)          & \textbf{62.03(0.67)} & \textbf{70.95(0.24)} & 89.85(1.36)          \\
		% 		$\text{RC}_{\text{img}{1\mhyphen11}}$\tiny, $p$=0  & 98.69(0.05) & 86.24(0.62)          & 51.23(3.63)          & 80.60(1.41)          & 62.36(1.30)          & 70.11(1.56)          & 89.44(0.78)          \\
		\bottomrule
	\end{tabular}
\end{table}
\begin{table}[htp]
	\small
	\centering
	\setlength{\tabcolsep}{3pt}
	\caption{Ablation study of consistency loss weight $\lambda$ on digits recognition benchmarks for $\text{RC}_{\text{mix}1\mhyphen7}$ and $\text{RC}_{\text{img}1\mhyphen7, p=0.5}$. DG-Avg is the average performance on MNIST-M, SVHN, SYNTH and USPS. Best results for each variant are \textbf{bold}.}
	\label{digits-consistency}
	\begin{tabular}{l|c|c|ccccc|c}
		\toprule
		& $\lambda$& MNIST-10k    & MNIST-M    & SVHN        & USPS        & SYNTH       & DG Avg         & MNIST-C     \\
		\midrule
		\multirow{6}{*}{$\text{RC}_{\text{mix}1\mhyphen7}$} & 20   & 98.90 \tiny(0.05) & 87.18 \tiny(0.81)          & \textbf{57.68 \tiny(1.64)} & \textbf{83.55 \tiny(0.83)} & 63.08 \tiny(0.50)          & 72.87 \tiny(0.47)          & 91.14 \tiny(0.53)          \\
		& 10   & 98.85 \tiny(0.04) & \textbf{87.76 \tiny(0.83)} & 57.52 \tiny(2.09)          & 83.36 \tiny(0.96)          & 62.88 \tiny(0.78)          & \textbf{72.88 \tiny(0.58)} & \textbf{91.62 \tiny(0.77)} \\
		& 5    & 98.94 \tiny(0.09) & 87.53 \tiny(0.51)          & 55.70 \tiny(2.22)          & 83.12 \tiny(1.08)          & 62.37 \tiny(0.98)          & 72.18 \tiny(1.04)          & 91.46 \tiny(0.50)          \\
		& 1    & 98.95 \tiny(0.05) & 86.77 \tiny(0.79)          & 56.00 \tiny(2.39)          & 83.13 \tiny(0.71)          & \textbf{63.18 \tiny(0.97)} & 72.27 \tiny(0.82)          & 91.15 \tiny(0.42)          \\
		& 0.1  & 98.84 \tiny(0.07) & 85.41 \tiny(1.02)          & 56.51 \tiny(1.58)          & 81.84 \tiny(1.14)          & 61.86 \tiny(1.44)          & 71.41 \tiny(0.98)          & 90.72 \tiny(0.60)          \\
		& 0    & 98.82 \tiny(0.06) & 84.91 \tiny(0.68)          & 55.61 \tiny(2.63)          & 82.09 \tiny(1.00)          & 62.15 \tiny(1.30)          & 71.19 \tiny(1.21)          & 90.30 \tiny(0.44)          \\
		
		\midrule
		\multirow{6}{*}{$\text{RC}_{\text{img}1\mhyphen7, p=0.5}$} & 20  & 98.79 \tiny(0.04) & 87.53 \tiny(0.79)          & 53.92 \tiny(1.59)          & 81.83 \tiny(0.70)          & 62.16 \tiny(0.37)          & 71.36 \tiny(0.49)          & \textbf{91.20 \tiny(0.53)}          \\
		& 10  & 98.86 \tiny(0.05) & 87.67 \tiny(0.37)          & 54.95 \tiny(1.90)          & 82.08 \tiny(1.46)          & 63.37 \tiny(1.58)          & 72.02 \tiny(1.15)          & 90.94 \tiny(0.51)          \\
		& 5   & 98.90 \tiny(0.04) & \textbf{87.77 \tiny(0.72)} & \textbf{55.00 \tiny(1.40)} & \textbf{82.10 \tiny(0.55)} & \textbf{63.58 \tiny(1.33)} & \textbf{72.11 \tiny(0.62)} & {90.83 \tiny(0.71)}          \\
		& 1   & 98.86 \tiny(0.04) & 86.74 \tiny(0.32)          & 53.26 \tiny(2.99)          & 81.51 \tiny(0.48)          & 62.00 \tiny(1.15)          & 70.88 \tiny(0.93)          & 91.11 \tiny(0.62)          \\
		& 0.1 & 98.85 \tiny(0.14) & 86.85 \tiny(0.31)          & 53.55 \tiny(3.63)          & 81.23 \tiny(1.02)          & 62.77 \tiny(0.80)          & 71.10 \tiny(1.31)          & 91.13 \tiny(0.69)          \\
		& 0   & 98.83 \tiny(0.07) & 86.08 \tiny(0.27)          & 54.93 \tiny(1.27)          & 81.58 \tiny(0.74)          & 62.78 \tiny(0.86)          & 71.34 \tiny(0.61)          & 91.18 \tiny(0.38)          \\
		
		% 		\midrule
		% 		\multirow{6}{*}{$\text{RC}_{\text{img}1\mhyphen7, p=0}$} & 20  & 98.81(0.06) & 86.61(0.86)          & 55.59(1.17)          & 82.68(0.88)          & \textbf{63.73(0.78)} & 72.15(0.65)          & \textbf{91.23(0.94)} \\
		% 		& 10  & 98.82(0.09) & \textbf{87.73(0.69)} & \textbf{56.17(2.53)} & \textbf{83.18(0.55)} & 63.63(0.99)          & \textbf{72.68(0.78)} & 90.94(0.55)          \\
		% 		& 5   & 98.92(0.10) & 87.71(0.69)          & 56.14(2.14)          & 82.93(0.74)          & 64.24(1.27)          & 72.76(0.82)          & 90.90(0.37)          \\
		% 		& 1   & 98.79(0.06) & 87.09(0.56)          & 55.71(1.57)          & 82.21(1.21)          & 63.51(0.63)          & 72.13(0.38)          & 90.06(0.75)          \\
		% 		& 0.1 & 98.68(0.07) & 86.62(0.54)          & 55.23(2.46)          & 81.83(1.43)          & 62.75(0.95)          & 71.61(0.92)          & 88.38(0.80)          \\
		% 		& 0   & 98.59(0.16) & 86.98(1.66)          & 54.28(2.21)          & 80.52(1.01)          & 62.03(0.67)          & 70.95(0.24)          & 89.85(1.36)         \\
		\bottomrule
	\end{tabular}
\end{table}
\newpage

\section{More Examples of {\RandConv} Data Augmentation}
\label{examples}
We provide additional examples of {\RandConv} outputs for different convolution filter sizes in Fig.~\ref{fig:randconv_example_more} and for its mixing variants at scale $k=7$ with different mixing coefficients in Fig.~\ref{fig:randconv_mix_example_more}. We observe that {\RandConv} with different filter sizes retains shapes at different scales. The mixing strategy can continuously interpolate between the training domain and a randomly sampled domain.  

%\newcounter{imgnum}
%\newcounter{sample_id}
%\newcounter{ks}

\begin{figure}[htp]
	\begin{center}
		\setlength{\tabcolsep}{0.01cm}
		\newcommand\cwidth{0.14\textwidth}
		\begin{adjustbox}{max width=\textwidth}
			\begin{tabular}{ccccccc}
				%				\newcounter{sample_id}
				Input & $\alpha=0.9$ & $\alpha=0.7$ & $\alpha=0.5$ & $\alpha=0.3$ & $\alpha=0.1$ & $\alpha=0$ \\
				\includegraphics[width=\cwidth]{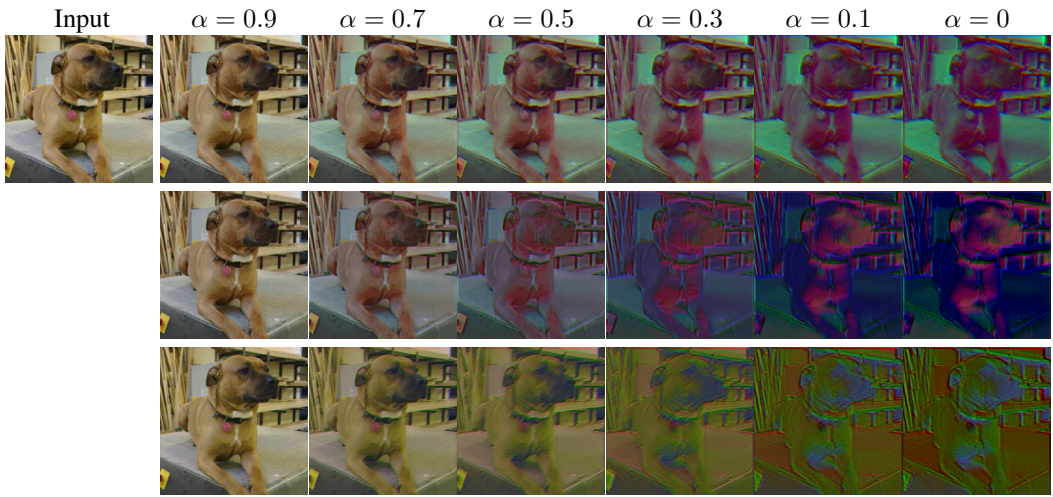}
				\forloop{sample_id}{1}{\value{sample_id} < 4}{ 
					&\includegraphics[width=\cwidth]{{Fig/examples/image2_kernel7_mix0.9_sample\arabic{sample_id}}.png} 
					&\includegraphics[width=\cwidth]{{Fig/examples/image2_kernel7_mix0.7_sample\arabic{sample_id}}.png} 
					&\includegraphics[width=\cwidth]{{Fig/examples/image2_kernel7_mix0.5_sample\arabic{sample_id}}.png} 
					&\includegraphics[width=\cwidth]{{Fig/examples/image2_kernel7_mix0.3_sample\arabic{sample_id}}.png} 
					&\includegraphics[width=\cwidth]{{Fig/examples/image2_kernel7_mix0.1_sample\arabic{sample_id}}.png} 
					&\includegraphics[width=\cwidth]{{Fig/examples/image2_kernel7_mix0_sample\arabic{sample_id}}.png} \\
				} 
			\end{tabular}
		\end{adjustbox}
	\end{center}
	%	\vspace{-3mm}
	\caption{\small Examples of the {\RandConv} mixing variant $\text{RC}_{\text{mix}7}$ on images of size $224^2$ with different mixing coefficients $\alpha$. When $\alpha=1$, the output is just the original image input;when $\alpha=0$, we use the output of the random convolution layer as the augmented image.}  
	% 	\vspace{-5mm}
	\label{fig:randconv_mix_example_more}
\end{figure}

\begin{figure}[htp]
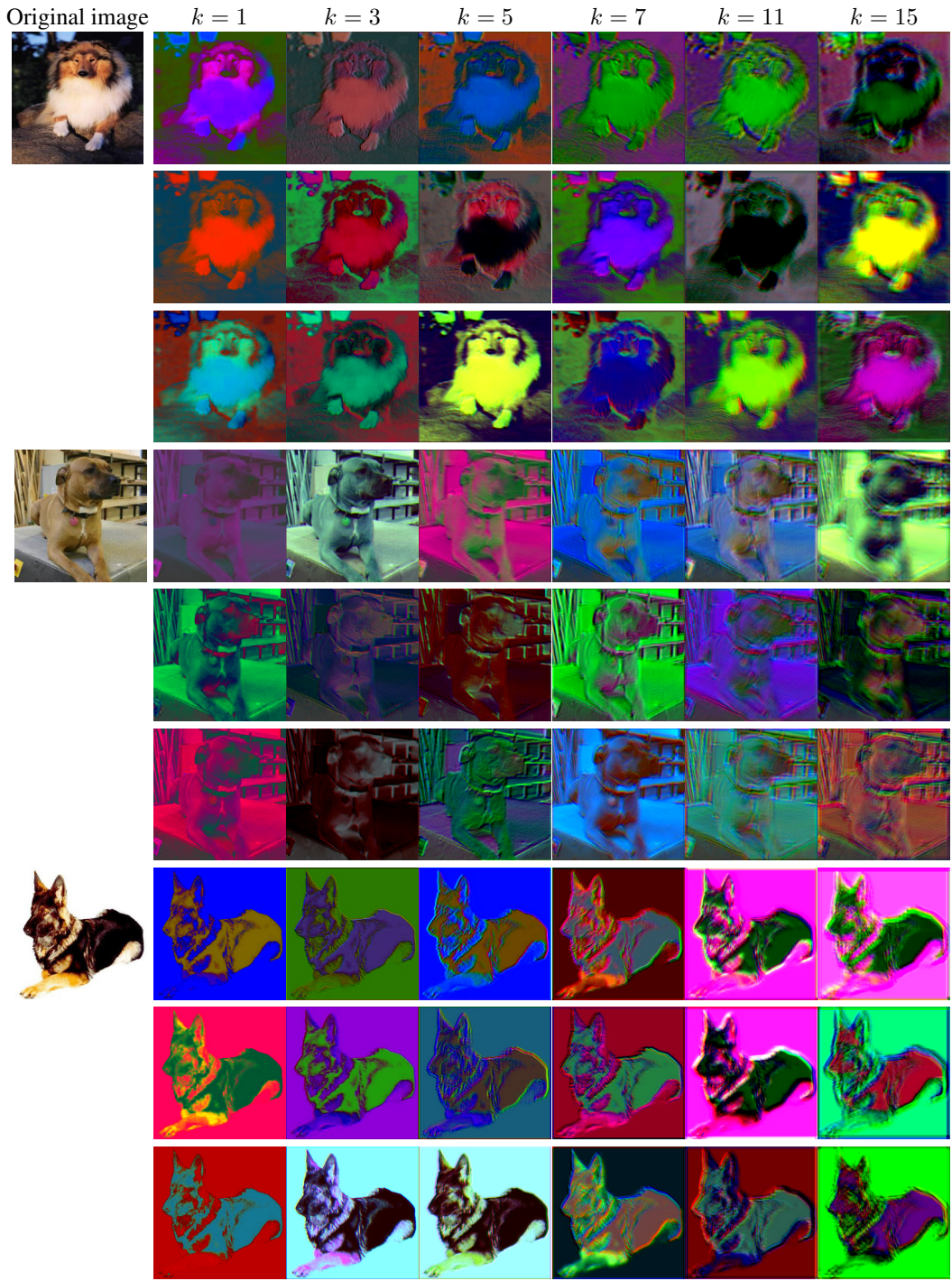

	\begin{center}
		\setlength{\tabcolsep}{0.01cm}
		\newcommand\cwidth{0.14\textwidth}
		\begin{adjustbox}{max width=\textwidth}
			\begin{tabular}{ccccccc}
				% (a)  &  (b)  & (c)  \\
				
				Original image & $k=1$ & $k=3$ & $k=5$ & $k=7$ & $k=11$ & $k=15$\\
				\forloop{imgnum}{1}{\value{imgnum} < 4}{
					\includegraphics[width=\cwidth]{Fig/examples/image\arabic{imgnum}.png} 
					\forloop{sample_id}{0}{\value{sample_id} < 3}{
						%\foreach \ks in {1,3,5,7,11}{ 
						& \includegraphics[width=\cwidth]{Fig/examples/image\arabic{imgnum}_kernel1_sample\arabic{sample_id}.png}
						& \includegraphics[width=\cwidth]{Fig/examples/image\arabic{imgnum}_kernel3_sample\arabic{sample_id}.png} 
						& \includegraphics[width=\cwidth]{Fig/examples/image\arabic{imgnum}_kernel5_sample\arabic{sample_id}.png}
						& \includegraphics[width=\cwidth]{Fig/examples/image\arabic{imgnum}_kernel7_sample\arabic{sample_id}.png} 
						& \includegraphics[width=\cwidth]{Fig/examples/image\arabic{imgnum}_kernel11_sample\arabic{sample_id}.png}
						& \includegraphics[width=\cwidth]{Fig/examples/image\arabic{imgnum}_kernel15_sample\arabic{sample_id}.png}
						% \clearpage
						% }
						% \forloop{ks}{0}{\value{ks} < 5}{
						% \pgfmathsetmacro{\tmp}{\ks*2+1}
						% & \value{\tmp} }
						\\
					}  
				}\\
				
				%     \midrule
				%     Original Image & $\alpha=0.9$ & $\alpha=0.7$ & $\alpha=0.5$ & $\alpha=0.3$ & $\alpha=0.1$ & $\alpha=0$ \\
			\end{tabular}
		\end{adjustbox}
	\end{center}
	% 	\vspace{-3mm}
	\caption{\small {\RandConv} data augmentation examples on images of size $224^2$. First column is the input image; following columns are convolution results using random filters of different sizes $k$. We can see that the smaller filter sizes help maintain the finer shapes.}
	\label{fig:randconv_example_more}
	%  	\vspace{-5mm}
\end{figure}

\end{document}